\newcommand{\ip}[2] {\ensuremath{\langle #1 , #2 \rangle}}
\newcommand{\norm}[1]{\ensuremath{\lVert #1 \rVert}}
\newcommand{\EE}{\mathbb{E}}
\newcommand{\RR}{\mathbb{R}}
\newcommand{\calS}{\mathcal{S}}
\newcommand{\calX}{\mathcal{X}}
\newcommand{\eps}{\epsilon}
\newcommand{\sig}{\sigma}
\newcommand{\al}{\alpha}
\newcommand{\lda}{\lambda}
\newcommand{\Lda}{\Lambda}
\newcommand{\gam}{\gamma}
\newcommand{\calC}{\mathcal{C}}
\newcommand{\calL}{\mathcal{L}}
\newcommand{\indistribution}{\overset{\mathcal{D}}{=}}
\newcommand{\diag}{\mathrm{Diag}}
\DeclareMathOperator*{\argmin}{argmin}
\newcommand{\one}{\mathbbm{1}}
\newcommand{\p}{\varphi}
\newcommand{\bs}[1]{\boldsymbol{#1}}
\newtheorem{theorem}{Theorem}
\newtheorem{corollary}{Corollary}
\newtheorem{lemma}{Lemma}
\newtheorem{problem}{Problem}
\newtheorem{definition}{Definition}
\newtheorem{remk}{Remark}
\newtheorem{assumption}{Assumption}
\def\showauthornotes{1}
\newcommand{\Authornote}[2]{{\sf\small\color{blue}{[#1: #2]}}}
\newcommand{\Authornote}[2]{}
\newcommand{\alt}[1]{\widetilde{#1}}
\newcommand{\alts}[1]{\overline{#1}}
\title{Learning Interpretable Concepts: Unifying Causal Representation Learning and Foundation Models}
\author[1]{Goutham Rajendran$^*$}
\author[2,3]{Simon Buchholz\thanks{Equal Contribution}}
\author[4]{\authorcr Bryon Aragam}
\author[2,5]{Bernhard Sch\"{o}lkopf}
\author[1]{Pradeep Ravikumar}
\affil[1]{Machine Learning Dept., Carnegie Mellon University, Pittsburgh, USA}
\affil[2]{Max Planck Institute for Intelligent Systems, T\"ubingen, Germany}
\affil[3]{Tübingen AI Center}
\affil[4]{University of Chicago, Chicago, USA}
\affil[5]{ELLIS Institute Tübingen}
\date{\today}
\begin{document}

\maketitle

\begin{abstract}
To build intelligent machine learning systems, there are two broad approaches. One approach is to build inherently interpretable models, as endeavored by the growing field of causal representation learning. The other approach is to build highly-performant foundation models and then invest efforts into understanding how they work. In this work, we relate these two approaches and study how to learn human-interpretable concepts from data. Weaving together ideas from both fields, we formally define a notion of concepts and show that they can be
provably recovered from diverse data. Experiments on synthetic data and large language models show the utility of our unified approach.
\end{abstract}

\section{Introduction}

A key goal of modern machine learning is to learn representations of complex data that are human-interpretable and can be controlled.
This goal is of paramount importance
given the breadth and importance of ML in today's world.
There seem to be two broad approaches toward such intelligent systems.
The first approach is to build models that are inherently interpretable and then subsequently focus on how to extract maximum performance from them; and the second approach is to build high-performance neural models, and then subsequently invest efforts to understand the inner workings of such models.

A prominent example of the first camp is the field of Causal Representation Learning (CRL) \cite{scholkopf2021towards, scholkopf2022statistical}. CRL is an intricate interplay of ideas from causality, latent variable modeling and deep learning, with the main goal being to reconstruct the true generative factors of data.
To ensure that the true generative factors can be provably
recovered, CRL relies on the central theme of \textit{identifiability} which posits that a unique model fits the data, which in turn implies that the problem of learning the generative factors is well-posed and therefore should theoretically be amenable to modern techniques.
If such a generative model reconstruction can be done, the model will naturally enjoy a host of desired properties such as robustness and generalization.
While this endeavor has been
somewhat successful in many domains such as computer vision \cite{khemakhem2020variational, von2021self, ahuja2023multi}, robotics \cite{lu2021invariant,brehmer2022weakly, lippe2023biscuit, zhang2023interpretable} and genomics \cite{seigal2022linear, zhang2023identifiability}, it is unclear how it relates to the research on foundation models.

The other camp is more empirical, where one tries to build a high-performance model where performance is measured via various downstream tasks and then eventually invest efforts into explaining or interpreting how they work. For instance, large language models and other foundation models are built to be highly performant for a variety of tasks. Owing to their incredible success, there is a growing but heavily-debated belief that such models are truly ``intelligent'' because they have indeed learned the true underlying generative factors somehow, sometimes referred to as the ``world model''. While
we are far from scientifically verifying this, the community has invested tremendous efforts into interpretability research of foundation models, e.g., the field of mechanistic interpretability \cite{olah2022} aims to reverse engineer what large language models learn.

In this work, we take a first step toward unifying these approaches.
We focus on the goal of learning identifiable human-interpretable concepts from complex high-dimensional data.
Specifically, we build a theory of what concepts mean for complex high-dimensional data and then study under what conditions such concepts are identifiable, i.e., when can they be unambiguously recovered from data.
To formally define concepts, we leverage extensive empirical evidence in the foundation model literature that surprisingly shows that, across multiple domains, human-interpretable concepts are often \textit{linearly} encoded in the latent space of such models (see \cref{para: linearity}), e.g., the sentiment of a sentence is linearly represented in the activation space of large language models \cite{tigges2023linear}.
Motivated by this rich empirical literature, we formally define concepts as affine subspaces of some underlying representation space.
Then we connect it to causal representation learning by proving strong identifiability theorems for \textit{only desired concepts} rather than all possible concepts present in the true generative model.
Therefore, in this work we tread the fine line between the rigorous principles of causal representation learning and the empirical capabilities of foundation models, effectively showing
how causal representation learning ideas can be applied to foundation models.

Let us be more concrete. For observed data $X$ that has an underlying representation $Z_u$ with $X = f_u(Z_u)$
for an arbitrary distribution on $Z_u$ and a (potentially complicated) nonlinear underlying mixing map $f_u$, we define concepts as affine subspaces $AZ_u = b$
of the latent space of $Z_u$s, i.e., all observations falling under a concept satisfy an equation of this form.
Since concepts are not precise and can be fuzzy or continuous, we will allow for some noise in this formulation by working with the notion of concept conditional distributions (\cref{def:conditional}).
Of course, in general, $f_u$ and $Z_u$ are very high-dimensional and complex, as they can be used to represent arbitrary concepts.
Instead of ambitiously attempting to reconstruct $f_u$ and $Z_u$ as CRL would do, we go for a more relaxed notion where we attempt to learn a minimal representation that represents \textit{only the subset of concepts we care about}; i.e., a simpler decoder $f$ and representation $Z$---different from $f_u$ and $Z_u$---such that $Z$ linearly captures a subset of relevant concepts as well as a valid representation $X = f(Z)$.
With this novel formulation, we
formally prove that concept learning is identifiable up to simple linear transformations (the linear transformation ambiguity is unavoidable and ubiquitous in CRL).
This relaxes the goals of CRL to only learn relevant representations and not necessarily learn the full underlying model. It further suggests that foundation models do in essence learn such relaxed representations, partially explaining their superior performance for various downstream tasks.

Apart from the above conceptual contribution, we also show that to learn $n$ (atomic) concepts, we only require $n + 2$ environments under mild assumptions. Contrast this with the adage in CRL \cite{khemakhem2020variational, buchholz2023learning} where we require $\dim(Z_u)$ environments for most identifiability guarantees, where as described above we typically have $\dim(Z_u) \gg n + 2$.
These theoretical insights are then validated on synthetic data, where we use a contrastive algorithm to learn such representations for a given collection of concepts.

Moving ahead to real-world data and foundation models, we proceed to show an effective application of our framework to large language models (LLMs). In particular, we consider the alignment problem of making pre-trained LLMs more \textit{truthful}.
First, we make the assumption that pre-trained LLMs have already learnt the concept of truth linearly, as has been empirically observed in \citet{li2023inference} (see also \cref{sec: related_work} for more evidence).
Therefore, we can apply our ideas, which we use to mechanistically reason about how the recent Inference-Time Intervention technique \cite{li2023inference} steers pre-trained LLMs to give more truthful responses to questions.
Then, we exploit our insights to extend this technique, by building steering matrices rather than steering vectors to align LLMs towards concepts. Preliminary experiments using LLaMA \cite{touvron2023llama} show the efficacy of our approach on the TruthfulQA dataset \cite{lin2021truthfulqa}.

In summary, our contributions are:
\begin{enumerate}
    \item
    We formalize the notion of distributions induced by abstract concepts in complex domains such as images or text. Our definition of concept conditional distributions allows both continuous and fuzzy concepts.
    \item We prove near-optimal identifiability results for learning a collection of concepts from a diverse set of environments.
    We also verify our guarantees via a contrastive learning algorithm on synthetic data.
    Thus, our work presents a novel framework for identifying concepts by weaving together ideas from causal representation learning and foundation models.
    \item We show the applicability of our ideas toward mechanistic interpretability, by explaining why inference-time steering vectors align large language models toward abstract concepts such as truthfulness. Furthermore, we extend this idea and propose to use steering matrices instead of steering vectors for better alignment. Our experiments with LLaMA \cite{touvron2023llama} on TruthfulQA \cite{lin2021truthfulqa} show improved performance.
\end{enumerate}

\section{Related work}
\label{sec: related_work}

\paragraph{Causal representation learning} Causal representation learning (CRL) \cite{scholkopf2021towards, scholkopf2022statistical} aims to learn generative factors of high-dimensional data.
This exciting field has seen significant progress in the last few years \cite{khemakhem2020variational, brehmer2022weakly, shen2022weakly, lachapelle2022disentanglement, moran2022identifiable, kivva2022identifiability, buchholz2023learning, gresele2021independent, ahuja2022interventional,von2023nonparametric}.
A fundamental perspective in this field is to ensure that the model parameters we attempt to recover are identifiable \cite{khemakhem2020variational, d2022underspecification, wang2021posterior}. Identifiability is the notion that the model parameters we learn are equivalent to the true model parameters up to simple transformations.
However, it's not clear until this work how this relates to the representations learned by foundation models.
Our work acts as a bridge between these two approaches.
We will elaborate more on the connection of our framework to CRL in \cref{app:crl}.

\paragraph{Linearity of representations}
\label{para: linearity}

Sometimes referred to as the linear representation hypothesis, it is commonly believed that
well-trained foundation models in multiple domains learn linear representations of human-interpretable concepts, with experimental evidence going back at least a decade \cite{mikolov2013linguistic,szegedy2013intriguing, arora2016latent}. This has been experimentally observed in computer vision models \cite{radford2015unsupervised, raghu2017svcca, bau2017network, engel2017latent, kim2018interpretability, wang2023concept, trager2023linear}, language models \cite{mikolov2013linguistic,pennington2014glove,arora2016latent, conneau2018you,tenney2019bert, elhage2022superposition}, large language models \cite{burns2022discovering, tigges2023linear, nanda2023emergent, moschella2022relative, li2023inference, park2023linear, gurnee2023finding, jiang2024learning}, and other intelligent systems \cite{mcgrath2022acquisition, schut2023bridging}.
Various works have also attempted to justify why this happens \cite{levy2014neural, arora2016latent, gittens2017skip, allen2019analogies,ethayarajh2018towards,seonwoo2019additive}.

We take a different angle: Given that this phenomenon has been observed for certain concepts of interest, how does this enable recovery of the concepts themselves?
Consequently, our model assumptions are well-founded and our theory applies to multiple domains of wide interest.

\paragraph{Concepts from pre-trained models}
Concept discovery is an important sub-field of machine learning which attempts to understand the behavior of powerful neural models. We do not attempt to list the numerous experimental works in this direction, see e.g., \citet{schut2023bridging, carvalho2019machine}.
However, theoretical progress in this direction is relatively limited. Prior works have attempted to formalize the notion of concepts \cite{wang2023concept, park2023linear, schut2023bridging}, however their definitions seem specific to the model and domain under consideration, e.g., \citet{park2023linear} focus on binary concepts via large language model representations of counterfactual word pairs, whereas our general concept definitions are applicable to all domains.
Importantly, they do not provide identifiable recovery guarantees from data.
To the best of our knowledge, ours is the first work to do this for general human-interpretable concepts, by exploiting ideas from the causal representation learning literature.

\section{Setup}

In this section, we provide a formal definition of concepts, which are high-level abstractions present in data. This allows us to develop a theoretical framework for associated data distributions and identifiability theory.
For the sake of intuition, we can think of the data as images of different objects and the color of the object as a concept.

\subsection{Generative model}
\label{sec: model}

We assume that the observed data $X$ lies in a space $\calX \subseteq \RR^{d_x}$ of dimension $d_x$ and has an underlying representation $X = f(Z)$ for latent variables $Z$ that lie in a latent concept space $\RR^{d_z}$ of dimension $d_z$.
In contrast to most prior works we do not take the viewpoint that $Z$ represents
the true underlying mechanism that generated the data. Instead we simply assume that the latent
representation has the geometric property that it maps certain regions of the observation space to
linear subspaces of the latent space (motivated by the foundation models literature). Then we investigate to what extent this assumption results in the
identifiability of such representations.
\begin{assumption}[Mixing function]\label{as:mixing}
    The non-linear $f$ is injective and differentiable.
\end{assumption}

Injectivity and differentiability are standard assumptions in representation learning.
However, we make no additional assumptions on $f$: The map from latent space to observation space can be arbitrarily non-linear. For the rest of this subsection, fix a data representation $f^{-1}$.

We now define concepts living in the latent space $\RR^{d_z}$.
Informally, we think of an (atomic) concept as a vector $a \in \RR^{d_z}$ such that for a given valuation $b \in \RR^n$, the set of all observations $X$ that satisfy this concept is given by $\{X = f(Z) | \ip{a}{Z} = b\}$. For instance, for an object in an image $X$, if $a \in \RR^{d_z}$ is the concept of red color, $b \in \RR$ could indicate the intensity; then all datapoints $X$ satisfying this concept, i.e., all images with an object that has color red with intensity $b$, can be characterized as $X = f(Z)$ where $Z$ satisfies $\ip{a}{Z} = b$.
For a 3D visualization, see \cref{fig: hyperplanes}
We make this intuition formal below.

\begin{figure*}[!t]
\captionsetup[subfigure]{oneside,margin={0.5cm,0cm}}
\begin{subfigure}{0.5\textwidth}
    \centering
\begin{tikzpicture}[
            > = stealth, %
            shorten > = 1pt, %
            auto,
            node distance = 3cm, %
            semithick, %
            label distance=1mm
        ]

        \tikzstyle{every state}=[
            draw = black,
            thick,
            fill = white,
            minimum size = 4mm,
            text width=4mm,
        ]
        \node[state] (X) {$X$};
        \node[state] (Z1)
        [above left of = X]
        {$Z_u$};
        \node[state] (Z2)
        [above right of = X]
        {$Z$};
        \path[->,draw,thick]
          (Z1) edge node[align = center, swap] {$f_u$} (X)
          (Z2) edge node[align = center] {$f$} (X);
        \path[->,draw,dashed]
          (Z1) edge node[align = center] {\footnotesize{Preserve concept linearity}} (Z2);
\end{tikzpicture}
\hspace{1em}\caption{$Z_u$ and $f_u$ are underlying and arbitrarily complicated, whereas (simpler) $Z, f$ are learned.}
\label{fig: goal}
\end{subfigure}
\begin{subfigure}{0.5\textwidth}
    \centering
    \hspace{5em}
    \includegraphics[scale=.4,trim={1cm .5cm 0cm 0cm}, clip]{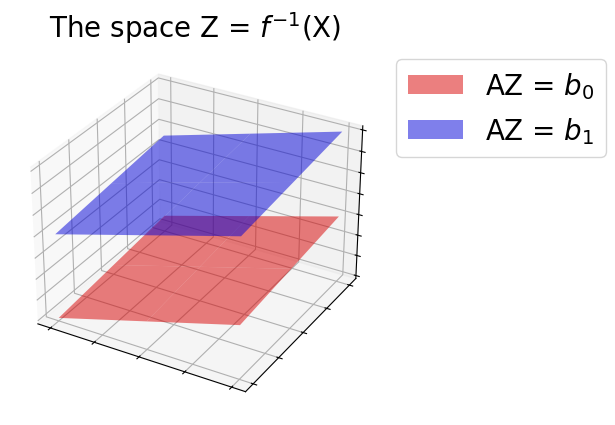}
    \caption{Concepts live in affine subspaces. The two subspaces in the figure correspond to the same concept but of different valuations.}
    \label{fig: hyperplanes}
\end{subfigure}
\caption{Illustration of our framework}
\label{fig:setup}
\vspace{-11pt}
\end{figure*}

\begin{definition}[Concepts]\label{def:concept}
    A concept $C$ is a linear transformation $A: \RR^{d_z} \to \RR^{d_C}$.
    The dimension of the concept will be denoted by $\dim(C)=d_C$.
    A  valuation is a vector $b \in \RR^{d_C}$ and we say that  a datapoint $X$ satisfies the concept $C$ with valuation $b$ if the evaluation map satisfies $AZ = b$ where $Z = f^{-1}(X)$.
\end{definition}
In this work, we are  interested in learning a collection of $m$ concepts $C^1, \ldots, C^m$ from observed data.
By left multiplying by the pseudo-inverse $A^+$, we can equivalently assume $A$ is a projector matrix. However, the current definition
is more suitable for embeddings of real models.

When we talk of learning concepts $C$, we are in particular interested in learning
the evaluation map $Af^{-1}(x)$. This is a more modest objective than learning the entire
map $f$ which is the usual goal in identifiability theory. While the latter typically requires
stringent assumptions, in particular $\Omega(d_z)$
environments are necessary, our weaker identifiability results only need $O(d_C) \ll O(d_z)$ environments.
To simply our analysis, we can view each concept as being composed of one dimensional concepts, which warrants the following definition.

\begin{definition}[Atoms]
\label{def: atomic}
    We define an atom (short for atomic concept) to be any concept $C$ with $\dim(C) = 1$.
\end{definition}

Intuitively, atomic concepts are fundamental concepts that live in a space of co-dimension $1$ in latent space, and thus are equivalently defined by vectors $a\in\RR^{d_z}$. For the sake of intuition, we can think of red color, size of object, etc., as examples of atomic concepts.
Any generic concept is then composed of a collection of atomic concepts, e.g., the concept $C$ of all small dark red objects will correspond to $dim(C) = 2$ with row $1$ corresponding to the atomic concept of red color with large valuation (dark red objects) and row $2$ corresponding to the atomic concept of object size with low valuation (small objects).

\subsection{Data distributions}
\label{sec: data_distributions}

We now define the distributions of datasets over concepts. We will predominantly work with distributions of $Z$ over $\RR^{d_z}$, as the resulting distribution of $X = f(Z)$ over $\RR^{d_x}$ can be obtained via a simple change of variables.

To build intuition, consider the case where we first collect a base dataset with some underlying distribution and then collect concept datasets via filtering. For instance, we could first collect a set of images of all objects and then, to collect a dataset of dark red colored objects, we filter them to only keep images of dark red colored objects. We call the former the \textit{base distribution} and the latter the \textit{concept conditional distribution} corresponding to our concept.

Fix a nonlinearity $f$.
We assume that the base data distribution is the distribution of $X = f(Z)$ with $Z \sim p$, where $p$ is the underlying distribution on $\RR^{d_z}$. In what follows, we will abuse notation and use $p$ for both the distribution and the corresponding probability density which we assume exists.
We make no further assumptions on $p$ since we do not wish to model the collection of real-life datasets that have been collected from nature and which could be very arbitrary.

We now define the concept conditional distribution, which is a distribution over $X$ that is induced by noisy observations of a particular concept at a particular valuation. Formally, assume we want to condition on some atomic concept $a\in \RR^{d_z}$ with valuation $b$.
It is reasonable to assume that this conditioning is a noisy operation. For instance, human beings are great at distilling concepts from noised images, e.g., they recognize
 cars in a misty environment.
We formalize this by assuming that
 data collection is based on a noisy estimate
$\alt{b}=\langle a, z\rangle + \eps$ where $\eps$ is independent of $z$ and its density is
a symmetric distribution with density $q(\eps)$.
Then we consider the concept conditional distribution
\begin{align}
\begin{split}
    p_C(z)=p(z|\alt{b} = b)&\propto
    p(\alt{b}=b|z) p(z)
    \\
    &=q(b-\langle a, z\rangle )p(z)
    \end{split}
\end{align}
where we used Bayes theorem in the last step.
 This definition directly extends to
 higher dimensional concepts which are concisely defined as follows.

\begin{definition}[Concept conditional distribution]\label{def:conditional}
For a concept $C$ with associated linear map $A$ and an arbitrary valuation $b \in \RR^{dim(C)}$, we define the concept conditional distribution to be the set of observations $X$ respecting this concept, which is defined as the resampled distribution of $X = f(Z)$ where $Z \sim p_C$ with
\begin{align}\label{eq:concept_measure}
p_C(Z) \propto p(Z)
 \prod_{k \le dim(C)} q((AZ - b)_k)
 \end{align}
\end{definition}
This is by no means the only possible definition, and we present feasible alternate definitions in \cref{app:alt_defns_concept_distributions}.
We remark that our formulation is related to the iVAE setting \citep{khemakhem2020variational}
and the auxiliary variable setting for identifiable ICA in \citet{hyvarinen2019nonlinear} and we discuss the relation later.
Note that the majority of recent identifiability results relied on interventional data while we only consider conditional information here.

We are ready to define our main problem of interest.

\begin{problem}
    We are given an observational dataset $X^0=f(Z^0)$ corresponding to the latent base distribution $p$ along with datasets $X^1, \ldots, X^m$ corresponding to concept conditional datasets for different concepts $C^1, \ldots, C^m$ and corresponding valuations $b^1, \ldots, b^m$
    over the same latent space $\RR^{d_z}$ with  the same mixing  $f$. Under what conditions (and up to which symmetries) can we learn the concepts $C^1, \ldots, C^m$, which includes the linear maps $A^1, \ldots, A^m$, and the concept valuations $A^ef^{-1}(x)$?
\end{problem}

Toward this end, a fundamental question is whether this problem is even possible, i.e., whether it is well-defined. This ties to the question of identifiability of such parameters of interest. Therefore, we make the following definition.
Informally, for the setting above, we say that the concepts $(C^1, A^1), \ldots, (C^m, A^m)$ with associated nonlinearity $f$ are identifiable (and thus learnable) if for any other collection of different parameters that fit the data, they are linearly related to the true parameters.

\begin{definition}[Identifiability]\label{def:identifiability}
Given datasets $X^0$, $X^1, \ldots, X^m$ corresponding to the observational distribution and $m$ concepts $C^1, \ldots, C^m$ with underlying distribution $p$ on $\RR^{d_z}$, nonlinearity $f$, linear maps $A^1, \ldots, A^m$ and valuations $b^1, \ldots, b^m$, we say the concepts are identifiable if the following holds: Consider any different collection of parameters $\alt{f}, \alt{d_z}, \alt{p}$, concepts $(\alt{C^1}, \alt{A^1}), \ldots, (\alt{C^m}, \alt{A^m})$ and valuations $\alt{b^1}, \ldots, \alt{b^m}$ that also generate the same observations $X^0, X^1, \ldots, X^m$.
Then there exists a shift $w \in \RR^{d_z}$, permutation matrices $P^e$ and invertible diagonal matrices $\Lambda^e$ such that for all $e$ and $x$,
\begin{align}\label{eq:ident1}
\alt{A}^e \alt{f}^{-1}(x) =   \Lambda^e P^eA^e(f^{-1}(x) + w),
\end{align}
i.e., we can evaluate the concept evaluations on the data up to linear reparametrizations.
Moreover, there exists a linear map $T: \RR^{\alt{d_z}} \to \RR^{d_z}$ such that
the concepts and their evaluations are related as follows
\begin{align}\label{eq:ident2}
    \alt{A}^e= P^eA^e T^{-1}, \quad \alt{b}^e=\Lambda^e P^e(b^e-A^ew).
\end{align}
\end{definition}
The main message of this definition is that identifiability implies we can identify the nonlinear map $f^{-1}$ within the span of the subspace of the concepts of interest, and therefore we can recover the concepts of interest from our data. That is, if certain concepts are identifiable, then we will be able to learn these concept representations up to linearity, even if they can be highly nonlinear functions of our data. Such concept discovery is useful because they can then be used for further downstream tasks such as controllable generative modeling.

We emphasize that in contrast to previous work we are not aiming to identify $f$
completely.
Let us now clarify why no stronger identifiability results can be expected.
Firstly, we cannot hope to resolve the linear transformation ambiguity because the latent space is not directly observed.
In other words, a concept evaluation can be defined either as $\ip{a}{Z}$ or as $\ip{Ta}{T^{-\top}Z}$
for an invertible linear map $T$.
However, for the purposes of downstream tasks, this is fine since the learned concepts will still be the same in principle.
Secondly, we cannot expect to recover $f^{-1}$ outside the span of the concepts because we do not manipulate the linear spaces outside the span therefore we do not learn this information from our observed data so this is also tight.
The permutation matrix captures the fact that the ordering of the concepts does not matter.
Therefore, this definition captures the most general identifiability guarantee that we can hope for in our setting and furthermore, this suffices for downstream tasks such as controllable data generation.

Because we will only be interested in recovering the set of concepts up to linear transformations, without loss of generality, we will fix the base collection of atomic concepts.
That is, we assume that each concept $C^e$ corresponds to a linear map $A^e$ whose rows are a subset of $\calC$, where $\calC = \{a_1, \ldots, a_n\}$ is a set of atomic concepts  that we wish to learn.
Moreover, we assume that they are linearly  independent, since we want them to encode distinct concepts.

\begin{assumption}\label{as: lin_ind_atomic_concepts}
    There exists a set of atomic concepts $\calC = \{a_1, \ldots, a_n\}$ of linearly independent vectors such that each concept $C^e$ under consideration contains as rows a subset $S^e$ of $\calC$
 and all concepts in  $\calC$ appear, i.e., $\bigcup_e S^e=[n]$.
\end{assumption}
\begin{remk}
    Note that identifiability as defined in Definition~\ref{def:identifiability}
    implies that the atoms can be identified in the sense that there is a permutation $\pi \in S_n$
   and $\lambda_i\neq 0$ such that for $T$ as in Definition~\ref{def:identifiability} and some $\lambda_i$
   \begin{align}\label{eq:ident_atoms1}
    \alt{a}_{\pi(i)}^\top  &=  a_i^\top T^{-1}\\
    \label{eq:ident_atoms2}
        \langle \alt{a}_{\pi(i)},\alt{f}^{-1}(x)\rangle &=
      \lambda_i\left(  \langle a_i, f^{-1}(x)\rangle + \langle a_i,w\rangle\right),
    \end{align}
    i.e., we can evaluate the valuations of the atomic concepts up to linear reparametrization.
\end{remk}
\section{Main Result}

In this section, we will be interested to identify a class of distinct concepts from data. As discussed earlier, we wish to recover them as per our precise definition of identifiability, which as we saw is the best possible.

The punchline is that when we have rich datasets, i.e., sufficiently rich concept conditional datasets, then we can recover the concepts. Importantly, we only require a number of datasets that depends only on the number of atoms $n$ we wish to learn (in fact, $O(n)$ datasets), and not on the underlying latent dimension $d_z$ of the true generative process. This is a significant departure from most works on causal representation learning, since the true underlying generative process could have $d_z = 1000$, say, whereas we may be interested to learn only $n = 5$ concepts, say. In this case, causal representation learning necessitates at least $\sim 1000$ datasets, whereas we show that $\sim n+2 = 7$ datasets are enough
if we only want to learn the $n$ atomic concepts.
We will explain the connection to causal representation learning in \cref{app:crl}.
Let us now discuss our main assumptions.
\begin{assumption}\label{as:gaussianity}
    We choose the distribution $q$ to be Gaussian with mean $0$ and variance $\sigma^2$ for some
    $\sigma^2>0$.
\end{assumption}

We relate the concepts $C^e$ to the atoms. Recall that we defined the index sets $S^e=\{i \in [n]: a_i\in \calC \text{ is a row of $A^e$}\}$ of atomic concepts in environment $e$.
We define the  environment-concept matrix $M\in \RR^{m\times n}$ indexed by environments and
atoms by
\begin{align}\label{eq:env_conc}
    M_{ei} = \begin{cases}
        \frac{1}{\sigma^2} &\text{if $i\in S^e$}
        \\
        0&\text{otherwise.}
    \end{cases}
\end{align}
Similarly, we consider the  environment-valuation matrix $B\in \RR^{m\times n}$ given by
\begin{align}\label{eq:env_valuation}
    B_{ei} = \begin{cases}
        \frac{b^e_{k}}{\sigma^2} &\text{if $i\in S^e$ and row $k$ of $A^e$ is $a_i$,}
        \\
        0&\text{otherwise.}
    \end{cases}
\end{align}
We now state our assumptions on the diversity of the concept conditional distributions that will ensure identifiability.

\begin{assumption}[Environment diversity I]\label{ass:div}
The environment-concept matrix $M\in \RR^{m\times n}$ has rank $n$ and there is a vector $v\in \RR^m$
such that $v^\top M=0$ and all entries of $v^\top B$ are non-zero ($B$ denotes that environment-valuation matrix).
\end{assumption}
We remark that this assumption can only hold for  $m\geq n+1$
and indeed
is satisfied under mild assumptions on the environments if $m=n+1$, as the following lemma shows.
\begin{lemma}\label{le:generic_assumption}
    Assumption~\ref{ass:div} is satisfied almost-surely if there are $n+1$ concept conditional distributions
    such that every $n$ rows of  the environment-concept matrix are linearly independent
    and the $b^e$ are drawn independently according to a continuous distribution.
\end{lemma}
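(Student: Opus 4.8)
The plan is to verify the two clauses of Assumption~\ref{ass:div} separately, each holding almost-surely under the stated hypotheses, and then to note that the intersection of two almost-sure events is almost-sure. Throughout, recall that $M_{ei} = \sigma^{-2}\one[i\in S^e]$, so the hypothesis ``every $n$ rows of the environment-concept matrix are linearly independent'' is a deterministic combinatorial condition on the index sets $S^1,\dots,S^{n+1}$ and does not involve the randomness in the $b^e$; only the valuations $b^e$ (and hence $B$) are random.

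First I would dispatch the rank condition. Since $M\in\RR^{(n+1)\times n}$ and, by hypothesis, some $n$ of its rows are linearly independent, those $n$ rows already span a rank-$n$ subspace, so $\mathrm{rank}(M)=n$. This is immediate and uses no randomness. Next, because $\mathrm{rank}(M)=n$ and $M$ has $n+1$ rows, the left null space $\{v\in\RR^{n+1}: v^\top M = 0\}$ is exactly one-dimensional; fix a nonzero spanning vector $v$. The key structural observation is that, because every $n$ rows of $M$ are linearly independent, every coordinate $v_e$ of $v$ is nonzero: if some $v_e=0$, then $v$ would be a nontrivial linear dependence among the remaining $n$ rows, contradicting the hypothesis. (Equivalently, $v_e$ equals $\pm\sigma^{2}$ times the determinant of the $n\times n$ minor of $M$ obtained by deleting row $e$, which is nonzero.) This will be the crucial fact feeding into the second clause.

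Now I turn to the condition that all entries of $v^\top B$ are nonzero, which is where the almost-sure qualifier enters. Writing out the $i$-th entry, $(v^\top B)_i = \sum_{e} v_e B_{ei} = \sigma^{-2}\sum_{e: i\in S^e} v_e\, b^e_{k(e,i)}$, where $k(e,i)$ is the row of $A^e$ equal to $a_i$. For a fixed $i$, this is a fixed nonzero linear combination (nonzero because at least one $v_e\neq 0$ and, by Assumption~\ref{as: lin_ind_atomic_concepts}, $\bigcup_e S^e=[n]$ so the index set $\{e: i\in S^e\}$ is nonempty) of distinct coordinates drawn from the independent continuous random vectors $b^e$. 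Since these coordinates are jointly absolutely continuous (each $b^e$ has a continuous distribution and they are independent), the event $\{(v^\top B)_i = 0\}$ is the event that an absolutely continuous random vector lands in a fixed affine hyperplane, which has Lebesgue measure zero, hence probability zero. Taking a union bound over the finitely many $i\in[n]$, the event that some entry of $v^\top B$ vanishes still has probability zero.

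The main obstacle — and it is a mild one — is making sure the linear functional $b\mapsto (v^\top B)_i$ is genuinely nonzero as a functional on the relevant coordinates, so that its zero set is a proper hyperplane rather than all of space; this is exactly what the ``every $n$ rows linearly independent'' hypothesis buys us, via the observation above that all $v_e\neq 0$. One should also be slightly careful that the same atom $a_i$ could in principle appear in $A^e$ only once per environment (so $k(e,i)$ is well-defined) — this follows from Assumption~\ref{as: lin_ind_atomic_concepts}, since the rows of $A^e$ form a linearly independent subset of $\calC$ and hence are distinct. Combining the two almost-sure events (the rank/null-space structure, which is in fact deterministic, and the non-vanishing of $v^\top B$) completes the proof.
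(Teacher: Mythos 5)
Your proposal is correct and follows essentially the same route as the paper's proof: both identify the unique (up to scale) left null vector $v$ of $M$, use the ``every $n$ rows linearly independent'' hypothesis to show all entries of $v$ are nonzero, and then observe that each entry of $v^\top B$ is a nontrivial linear form in independent continuously distributed valuations, hence vanishes with probability zero. No gaps; the only cosmetic difference is that you invoke Assumption~\ref{as: lin_ind_atomic_concepts} for the nonemptiness of $\{e: i\in S^e\}$ where the paper derives it directly from the rank hypothesis on $M$.
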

We also assume one additional diversity condition.
\begin{assumption}[Environment diversity II]\label{ass:div2}
For every pair of atoms $a_i$ and $a_j$ with $i\neq j$ there is an environment $e$ such that $i\in S^e$
and
$j\notin S^e$.
\end{assumption}

We remark that these are the only assumptions about the sets $S^e$. In particular, we do not need to know the sets $S^e$. In the proof, we will extract these sets based on a the signatures they leave on the datasets.
We can now state our main result.

\begin{theorem}
\label{thm: main}
        Suppose we are given $m$ context conditional datasets $X^1,\ldots, X^m$ and the observational dataset $X^0$ such that Assumptions \ref{as:mixing}, \ref{as: lin_ind_atomic_concepts}, \ref{as:gaussianity}, \ref{ass:div}, and \ref{ass:div2} hold. Then the concepts are identifiable as per \cref{def:identifiability}.
\end{theorem}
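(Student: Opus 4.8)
The plan is to follow the classical two-stage strategy used in nonlinear ICA / iVAE style identifiability arguments, adapted to the fact that we only want to pin down $f^{-1}$ within the span of the atoms $\calC$. Fix the true parameters $(f, p, \calC, \{A^e, b^e\})$ and an alternative tuple $(\alt f, \alt p, \alt{\calC}, \{\alt A^e, \alt b^e\})$ generating the same observational and concept-conditional datasets $X^0, X^1, \ldots, X^m$. Write $h = \alt f^{-1}\circ f : \RR^{d_z} \to \RR^{\alt{d_z}}$ for the change-of-variables map between the two latent representations; since both $f, \alt f$ are injective and differentiable, $h$ is a differentiable injection and the densities transform by the usual Jacobian factor. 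Because $X^0$ has the same law under both models, $p(z)$ and $\alt p(h(z))\,|\det Dh(z)|$ agree. Then for each environment $e$, equating the concept-conditional densities of $X^e$ and using \cref{eq:concept_measure} together with \cref{as:gaussianity} (so $q$ is Gaussian, $\log q$ quadratic), the base-density and Jacobian factors cancel and we are left, for every $z$, with an identity equating a quadratic form in $A^e z - b^e$ to a quadratic form in $\alt A^e h(z) - \alt b^e$, up to the normalizing constants.

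The heart of the argument is to exploit these quadratic identities across environments. Subtracting the observational identity ($e=0$) from the environment-$e$ identity removes the unknown $\log\alt p$ term and yields, for each $e$, that a specific quadratic polynomial in $z$ equals a quadratic polynomial in $h(z)$. Summing appropriately weighted versions over $e$, and using the environment-concept matrix $M$ and environment-valuation matrix $B$ from \cref{eq:env_conc,eq:env_valuation}: the rank-$n$ condition in \cref{ass:div} lets me solve for each atomic quadratic $\langle a_i, z\rangle^2$ (and cross terms) as a quadratic function of $h(z)$, while the vector $v$ with $v^\top M = 0$ and $v^\top B$ entrywise nonzero gives an additional relation forcing the \emph{linear} parts to be matched and the shift $w$ to appear. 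Standard manipulation of "quadratic $=$ quadratic after a diffeomorphism" arguments (differentiate twice, or use that a sum of squares of affine functions determines those affine functions up to sign and permutation) then shows that each $\langle a_i, z\rangle$ equals an affine function of the coordinates of $h(z)$; collecting these over $i$ shows $z\mapsto A^\calC z$ (stacking all atoms) factors through $h$ via an affine map, which is exactly \cref{eq:ident1}. \cref{ass:div2} is what upgrades "affine combination of several atoms" to "one atom each up to permutation and scaling": for any pair $i\neq j$, an environment containing $i$ but not $j$ distinguishes their signatures, so the permutation $P^e$ and diagonal scaling $\Lambda^e$ are well-defined and consistent across environments. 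Finally, reading off how $\alt A^e, \alt b^e$ relate to $A^e, b^e$ from the matched affine maps gives \cref{eq:ident2} with $T$ the linear map induced by $h$ on the relevant subspace.

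I expect the main obstacle to be the step that converts the algebraic identity "quadratic form in $z$ $=$ quadratic form in $h(z)$, valid for all $z$" into the rigidity conclusion that $h$ acts affinely on the span of the atoms — one must be careful that $h$ is only assumed differentiable and injective (not smooth, not surjective, and $\alt{d_z}$ may differ from $d_z$), so the usual "differentiate the identity twice to see $Dh$ is locally constant" move needs justification, and the non-square Jacobian means one should work on the image of $h$ and argue that the quadratic constraints force $Dh$ to be constant along the directions dual to the atoms. A secondary technical point is bookkeeping: showing that the \emph{same} shift $w$ works for all environments and that the per-environment $P^e, \Lambda^e$ are the restrictions of a single global picture — this is where \cref{ass:div} (the common null vector $v$) and \cref{ass:div2} (pairwise separation) must be combined carefully, and where the assumption "we do not need to know the sets $S^e$" is cashed out by recovering $S^e$ from which atomic quadratics actually appear with nonzero coefficient in environment $e$.
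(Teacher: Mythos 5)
There is a genuine gap, and it sits exactly at the step you describe as routine linear algebra. You propose to use the stacked identities
\begin{align*}
\tfrac12 M\bigl(\langle a_1,z\rangle^2,\ldots\bigr)^\top - B\bigl(\langle a_1,z\rangle,\ldots\bigr)^\top
= \tfrac12 \alt{M}\bigl(\langle \alt{a}_1,\alt{z}\rangle^2,\ldots\bigr)^\top - \alt{B}\bigl(\langle \alt{a}_1,\alt{z}\rangle,\ldots\bigr)^\top
\end{align*}
and to ``solve for each atomic quadratic $\langle a_i,z\rangle^2$ and for the linear parts separately'' using the rank-$n$ condition on $M$ plus the single vector $v$ with $v^\top M=0$ and $v^\top B$ entrywise nonzero. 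Isolating $\langle \alt{a}_i,\alt{z}\rangle^2$ with zero linear part (and vice versa) requires a left combination $v'$ of the environments with $v'^\top \alt{M}=2\bs{e}_i^\top$ and $v'^\top \alt{B}=0$, i.e.\ it requires the concatenated matrix $(\alt{M},\alt{B})\in\RR^{m\times 2n}$ to have rank $2n$, hence $m\geq 2n$ environments. That is precisely the iVAE-style warm-up proved in Appendix~\ref{app:ivae_proof} (Theorem~\ref{thm:ivae}); under Assumption~\ref{ass:div} with $m=n+1$ or $n+2$ the row space of $(M,B)$ has dimension at most $n+1$ in $\RR^{2n}$, so the quadratic and linear parts cannot be separated by any weighting over environments, and the single extra relation coming from $v$ yields only one scalar equation, not the $n$ equations you would need to ``match the linear parts.'' Your argument, as written, proves the $2n+1$-environment theorem, not Theorem~\ref{thm: main}.

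The paper's actual proof avoids this separation entirely and uses two ideas you do not have. First, it identifies $M$ and $B$ \emph{directly from the observed distributions} before any comparison of representations: the functions $G^e(x)=\ln p^0_X(x)-\ln p^e_X(x)$ are identifiable, the argmin sets of $\sum_{e\in T}G^e$ are images under $f$ of affine subspaces of codimension $|\bigcup_{e\in T}S^e|$, and reading off these dimensions over all $T\subseteq[m]$ recovers $M$ up to permutation; the valuations are then recovered from quantities of the form $\min_{I_{T_i}^{e_1}}G^{e_2}-\min_{I_{T_i}}G^{e_2}=(B_{e_1i}-B_{e_2i})^2/2$, with sign and shift fixed by a normalization chosen in a preliminary reduction step. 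Second, once both representations are normalized to have the same $M$ and $B$, the conclusion comes not from a polynomial-rigidity argument but from differentiating $\boldsymbol{g}(z)=\boldsymbol{g}(\p^o(z))$ to get $M\diag(z)-B=(M\diag(\alt{z})-B)D\p^o$, left-multiplying by the augmented matrix $\binom{M^+}{v^\top}$, and showing that for almost every $z$ the left kernel vector $w$ of $\diag(z_1,\ldots,z_n,0)-\binom{M^+}{v^\top}B$ has all entries nonzero --- this is where $v^\top B$ being entrywise nonzero is actually used, and it forces $\alt{z}_i=z_i$ coordinatewise. Neither of these steps is a refinement of your outline; they are a different route, and without them the environment count in your proposal cannot be brought below $2n+1$.
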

\begin{remk}
    Assumption~\ref{ass:div} can only be satisfied for $m\geq n+1$, i.e., the result requires at least  $n+2$ environments. On the other hand, Lemma~\ref{le:generic_assumption} assures
    that $n+2$ environments are typically sufficient. We expect that the  result could be slightly improved by showing identifiability for $n+1$ environments under suitable assumptions. However, this would probably require more advanced techniques from algebraic statistics \citep{drton2009} compared to the necessary techniques for our result.
\end{remk}

As mentioned before, our setting somewhat resembles the iVAE setting in \citet{khemakhem2020variational}
and therefore, their proof techniques can also be applied, with several modifications, to derive identifiability results in our setting (however our formulation and application are very different).
We notably remark that this approach will require more environments because their main assumption is that
the matrix $\Lambda = (M, B)\in \RR^{m\times 2n}$ has rank $2n$ so that $2n+1$ environments are necessary. Moreover, this rank condition is much stronger than Assumption~\ref{ass:div}.
For completeness and as a warm-up we prove this result in Appendix~\ref{sec: proofs}.

The full proof of \cref{thm: main} can be  found in Appendix~\ref{sec: proofs}.
Although the proof is fairly involved, we present the core idea here.
Given two sets of parameters that fit the data, we show they're linearly related. To do this, we first equate the log-odds, which by assumption will be quadratic polynomials. Taking derivatives, we will be able to obtain linear equations with coefficients depending on our environment matrices. By a careful linear algebraic analysis, we can conclude identifiability. The diversity assumptions ensure invertibility of various matrices that show up, which lets us solve the equations.

\section{Experiments}

We present two slates of experiments to supplement our framework. In \cref{sec: synthetic}, we validate our
results on synthetic data, via an end-to-end contrastive learning algorithm for concept learning. We additionally propose a rejection sampling algorithm to sample from concept conditional distributions in \cref{sec: sampling}.
In \cref{sec: realworld}, we focus on real-world settings especially on large language models (LLMs). We show how our techniques can be used to align pre-trained LLMs towards abstract concepts such as truthfulness.

\subsection{End-to-end Contrastive learning algorithm}
\label{sec: synthetic}

In this section, we will validate our insights on synthetically generated data in order to verify our results on identifiability.
We present an end-to-end framework based on contrastive learning (similar to \cite{buchholz2023learning}) to learn the nonlinearity as well as concepts from data.
The model architecture is designed based on our concept conditional distribution parametrization.

The core idea is to utilize contrastive learning as follows.
For each concept conditional distribution $X^e$, we train a neural network to distinguish concept samples $x \sim X^e$ from base samples $x \sim X^0$.
First, we compute the true log-odds for this classification problem.

\begin{restatable}{lemma}{logodds}
\label{lem: logodds}
For any concept index $e$, there exist some constants $c_e$  such that
\begin{align*}
   &\ln(p^e(Z)) - \ln(p(Z))
  \\
  &\qquad= \sum_{i=1}^n\left( -\frac{1}{2} M_{ei} \langle a_i, Z^e\rangle^2 + B_{ei} \langle a_i,Z^e\rangle\right) + c_e
\end{align*}
where $M, B$ are the environment-concept matrix and the environment-valuation matrix defined in \eqref{eq:env_conc} and \eqref{eq:env_valuation}.
\end{restatable}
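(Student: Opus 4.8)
The plan is to compute the density ratio $p^e(Z)/p(Z)$ directly from \cref{def:conditional}, substituting the Gaussian choice for $q$ from \cref{as:gaussianity}, and then expand the resulting expression. Concretely, by \eqref{eq:concept_measure}, $p_C(Z) \propto p(Z)\prod_{k \le \dim(C^e)} q((A^eZ - b^e)_k)$, and since each $q$ is the density of a centered Gaussian with variance $\sigma^2$, we have $q(t) = \frac{1}{\sqrt{2\pi}\sigma}\exp(-t^2/(2\sigma^2))$. Taking logarithms, the normalizing constants (including the proportionality constant hidden in $p_C$ and the Gaussian prefactors $\frac{1}{\sqrt{2\pi}\sigma}$) all collect into a single $Z$-independent constant $c_e$, leaving $\ln(p^e(Z)) - \ln(p(Z)) = -\frac{1}{2\sigma^2}\sum_{k \le \dim(C^e)}(A^eZ - b^e)_k^2 + c_e$.

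The next step is to rewrite this sum over rows $k$ of $A^e$ as a sum over atoms $i \in [n]$ using \cref{as: lin_ind_atomic_concepts}: row $k$ of $A^e$ equals some atom $a_i$ with $i \in S^e$, and the corresponding valuation entry is $b^e_k$. Thus $(A^eZ - b^e)_k = \langle a_i, Z\rangle - b^e_k$. Expanding the square, $(\langle a_i, Z\rangle - b^e_k)^2 = \langle a_i, Z\rangle^2 - 2 b^e_k \langle a_i, Z\rangle + (b^e_k)^2$, where the last term is again absorbed into $c_e$. Summing over $k$ and re-indexing by $i \in S^e$ gives $-\frac{1}{2\sigma^2}\sum_{i \in S^e}\langle a_i, Z\rangle^2 + \frac{1}{\sigma^2}\sum_{i \in S^e} b^e_k \langle a_i, Z\rangle + c_e$. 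Finally, recognizing that $M_{ei} = 1/\sigma^2$ precisely when $i \in S^e$ and is $0$ otherwise (from \eqref{eq:env_conc}), and similarly $B_{ei} = b^e_k/\sigma^2$ when $i \in S^e$ with row $k$ equal to $a_i$ and $0$ otherwise (from \eqref{eq:env_valuation}), we can extend both sums to run over all $i \in [n]$ at no cost, yielding exactly $\sum_{i=1}^n\left(-\frac{1}{2}M_{ei}\langle a_i, Z^e\rangle^2 + B_{ei}\langle a_i, Z^e\rangle\right) + c_e$.

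This is essentially a bookkeeping computation, so there is no real obstacle; the only thing requiring care is the reconciliation of indices — making sure that the map from rows $k$ of $A^e$ to atoms $i \in S^e$ is a bijection (which it is, since \cref{as: lin_ind_atomic_concepts} says the rows of $A^e$ form the subset $S^e$ of the linearly independent set $\calC$), so that the change of summation variable from $k$ to $i$ is valid and the valuation entries line up correctly with the definition of $B$. One should also note that the notation $Z^e$ in the statement refers to a latent variable distributed according to $p^e = p_{C^e}$, but the identity holds pointwise in $Z$ regardless, so I would simply prove the pointwise identity $\ln p^e(z) - \ln p(z) = \sum_i(-\frac12 M_{ei}\langle a_i,z\rangle^2 + B_{ei}\langle a_i,z\rangle) + c_e$ for all $z \in \RR^{d_z}$ and let the statement follow by substitution.
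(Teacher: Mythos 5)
Your proposal is correct and matches the paper's argument: the paper proves this lemma by pointing to the identity \eqref{eqn: logodds} derived in the proof of Theorem~\ref{thm:ivae}, which is exactly the same computation you carry out — take logs of \eqref{eq:concept_measure} with the Gaussian $q$, absorb normalizing constants into $c_e$, expand the square, and re-index the sum over rows $k$ of $A^e$ as a sum over atoms $i \in [n]$ using the definitions of $M$ and $B$. Your added care about the bijection between rows and atoms and about proving the identity pointwise in $z$ is sound but does not change the route.
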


The proof is in \cref{sec: contrastive}.
Now, let's try to learn the $n$ atomic concepts up to linearity. To do this, we build a neural architecture for this classification problem with the final layer mimicking the log-odds expression above, which can then be trained end-to-end. Because of the careful parametrization of the last layer, this will encourage the model to learn the representations as guaranteed by our results.
The details are deferred to \cref{sec: contrastive}.

\paragraph{Sampling from concept conditional distributions}
A common task in controllable generative modeling is being able to generate data from a known concept.
Note that this is not straightforward in our setting because the normalization term in \cref{eq:concept_measure} is not efficiently computable. To do this efficiently, we also outline a simple algorithm (Algorithm \ref{algo: sampling} in \cref{sec: sampling}) to sample from the concept conditional distribution for a known concept.
Our proposed algorithm is based on rejection sampling and the algorithm as well as the complexity analysis are deferred to \cref{sec: sampling}.

\paragraph{Synthetic experiments}
We test our proposed method on synthetic data as follows. We sample the base distribution from a Gaussian Mixture model and experiment with both linear and nonlinear mixing functions. The parameters are all chosen randomly.
The number of concepts $n$ is intentionally chosen to be less than the ground truth dimension $d_z$ and the number of concepts is $m = n + 1$ as per our theory.
Additional details are deferred to \cref{sec: synth_setup}.

\begin{table}[!h]
  \centering
\caption{Linear identifiability when number of concepts $n$ is less than underlying latent dimension $d_z$, averaged over $5$ seeds}
\label{tab: synth}
{\footnotesize
\begin{tabular}{llll|ll}
\toprule
Type of mixing $f$ & $n$ & $d_z$ & Obs dim $d_x$ & $R^2$$\uparrow$ & MCC$\uparrow$\\
\midrule
Linear & 2 & 3 & 4 & 0.98 & 0.98 \\
Nonlinear & 2 & 3 & 4 & 0.94 & 0.96 \\
\midrule
Linear & 3 & 4 & 6 & 0.99 & 0.86 \\
Nonlinear & 3 & 4 & 6 & 0.97 & 0.92 \\
\midrule
Linear & 4 & 8 & 10 & 0.97 & 0.87 \\
Nonlinear & 4 & 8 & 10 & 0.94 & 0.87 \\
\bottomrule
\end{tabular}}
\end{table}
In \cref{tab: synth}, we show the metrics achieved by our contrastive learning algorithm on this data, where we report the $R^2$ and Mean Correlation Coefficient (MCC) metrics \cite{khemakhem2020variational, khemakhem2020ice} of the recovered latents with respect to the ground truth concept valuations.
There are no baselines since we are in a novel setting where standard causal representation learning baselines do not easily apply.
However, our metrics are comparable to what's usually reported in such highly nonlinear settings \cite{willetts2021don, buchholz2023learning}.

We remark that variations of the contrastive method can be designed for harder synthetic settings and different problems related to concept discovery.
However, we will move onto real-life data experiments next.

\subsection{Alignment of Large Language Models}
\label{sec: realworld}

In this section, we show an application of our framework to alignment of Large Language Models (LLMs).
In particular, we exploit our ideas to improve the Inference-Time Intervention technique \cite{li2023inference} to promote LLMs to be more truthful.
In contrast to previous sections, we will focus on exactly one concept -- \textit{truthfulness} with two relevant valuations -- \textit{false} and \textit{true}.
We concretize this by assuming this concept is linear and the valuations correspond to real numbers (as evidenced by \citet{li2023inference}, see also \cref{sec: related_work} for more evidence).
The downstream task is to make pre-trained LLMs answer truthfully, i.e., change the valuation of this concept from \textit{false} to \textit{true}, without affecting any other orthogonal concepts.

Thus, in contrast to the previous section where we trained an end-to-end model to learn both the non-linearities as well as the concepts, we will now assume that the non-linearity has already been learned up to a linear transformation (by large-scale training of LLMs). This aligns with our theoretical insights because the training data for powerful LLMs are diverse, so they essentially satisfy our core assumptions (see also the related work \cite{gupta2023context} that proposes that context is environment in LLM training). Therefore, we simply focus on the downstream tasks, which in this section is LLM alignment.
The difficulty, of course, is that we do not know the concept matrix or the valuations.
A detailed introduction to large language models (LLMs) and the Inference-Time Intervention (ITI) technique is deferred to \cref{sec: iti}. We present a simplified summary here.

Given a context $x$, a GPT-style LLM outputs a representation $f(x)$, which is then decoded into probabilities of the next token.
ITI is an activation patching technique that controls the behavior of LLMs during inference time, in particular it promotes truthfulness.
ITI proposes to modify the forward pass (during inference time) to $f(x) + \al \sig \cdot \eta$ where $\eta$ is a trained steering vector, $\sig$ is the standard deviation of the representations in this direction and $\al$ is a tunable hyperparameter.\footnote{Note that the simplified version is stated here. In ITI, $\eta$ is not actually a constant vector because it's an autoregressive model. However a constant vector is added to each layer during inference time (which is sufficient for relaying intuition in this section). See \cref{sec: iti} for details.}

Here, the steering vectors are intended to steer the LLM towards truthful responses. To train them, the work \cite{li2023inference} uses a set of counterfactual pairs $(c_i^F, c_i^T)$ of false and true reponses to the same questions (for instance from the TruthfulQA dataset \cite{lin2021truthfulqa}), compute the difference of their representations $f(c_i^T) - f(c_i^F)$ and the normalized mean difference is used as the steering vector $\eta$.

First, we use our framework to mechanistically reason about how ITI modifies the internal representations and illustrate why the mean is a good choice.
We present an intuitive summary here, postponing details to \cref{sec: iti}. If we consider the concept conditional distributions of the true and false valuations of our truthfulness concept (assume for simplicity it's an atomic concept $a$ with valuations $b^T, b^F$), then they concentrate in their respective hyperplanes $\langle a, f(c_i^T)\rangle = b^T$ and $\langle a, f(c^F_i)\rangle = b^F$.
Now, consider the steering vector $\eta$, the normalized mean of the $f(c_i^T) - f(c_i^F)$.
The crucial insight is that if we have a diverse set of counterfactual pairs, then in the hyperplane orthogonal to $a$, the fluctuations of $f(c_i^T)-f(c_i^F)$ get averaged and should disappear when taking the mean. Therefore, $\eta$ will be parallel to $a$, which is the optimal steering vector. This intuition can be generalized to non-atomic concepts, as we will describe in  Appendix~\ref{sec: steering_vector}.
In summary, our framework enables progress on mechanistic interpretability \cite{olah2022}, an emerging field which aims to understand LLM mechanisms.

Next, we propose a modification of this technique as follows.
First, we use our insights to arrive at the intuition that any weighted mean of the counterfactual differences should roughly have the same steering behavior. And moreover, we choose the weights dynamically, with weights being similarities of the test sentence to the sentences from the training set. This enables similar sentences in the training set to have a higher vote in the choice of steering vector.
With this new idea, we use as steering vector
\begin{align*}
\eta(x) &= \sum_{i}\ip{\lda(x)}{\lda(c_i^F)}(f(c_i^T) - f(c_i^F))
\end{align*}
where $\lda$ is a sentence embedding (such as SBERT \cite{reimers-2019-sentence-bert}).
We detail this further in \cref{sec: steering_vector}.
Finally, we also show how to implement this efficiently using \textit{steering matrices} instead of steering vectors.

\begin{table}[!ht]
  \centering
\caption{Comparison of steering vectors for LLM alignment}
\label{tab: llama}
{\footnotesize
\begin{tabular}{lllll}
\toprule
Technique & $\alpha$ & Acc (\%) & CE loss & KL div. \\
\midrule
Baseline & - & 25.7 & 2.16  & 0.0 \\
Random direction & 20 & 25.8 & 2.19  & 0.02 \\
CCS direction & 5 & 26.2  & 2.21  & 0.06  \\
ITI: Probe weight dir. & 15 & 27.0 & 2.21  & 0.06  \\
ITI: Mass mean shift & 20 & 28.8 & 2.41  & 0.27 \\
\midrule
Steering matrices (ours) & 15 & \textbf{29.5} & 2.61 & 0.41\\
\bottomrule
\end{tabular}}
\end{table}

In \cref{tab: llama}, we show the results of our experiments with steering matrices.
We use the open-source large language model LLaMA \cite{touvron2023llama} with 7 billion parameters and the sentence transformer SBERT \cite{reimers-2019-sentence-bert} for the sentence embedding.
We report the accuracy of the multiple-choice track of TruthfulQA \cite{li2023inference} over 3 random seeds and also the Cross-Entropy Loss and KL divergence of the model pre- and post-intervention. Higher accuracy is better and lower CE loss, and KL divergence indicate that the original model has not been significantly modified. Here, the baselines are the unmodified model, random direction intervention, Contrast-Consistent Search (CCS) direction \cite{burns2022discovering} and two different direction choices using vanilla ITI; and 2-fold cross validation is used.
Additional technical details are deferred to \cref{sec: iti}.

We see that the multiple-choice accuracy improved, showcasing the potential of our steering matrices technique which is novel in the field of LLM alignment to the best of our knowledge.
Note that this is meant to be a proof of concept and not meant to be a comprehensive study of this specific technique.
For exploratory purposes, we outline potential modifications to our technique in \cref{sec: iti} which could potentially improve the performance, both in terms of accuracy as well as in terms of invasiveness.
These form an exciting direction for a more comprehensive study of our proposed ideas, which we leave for future work.

\section{Conclusion}

In this work, we exploited ideas from both the causal representation learning and the foundations model literature to study the problem of learning human-interpretable concepts from data. We proposed a formal definition of concepts and studied under what conditions they can be provably recovered, suggesting what representations foundation models learn. Finally, synthetic experiments and large language model alignment experiments showcase the utility of our ideas.

While we have made initial progress in unifying these disparate fields, this direction holds a lot of promise.
We believe there's a lot of upside in utilizing ideas from identifiable representation learning in order to theoretically analyze what representations foundation models learn, as it will help us understand, explain, and improve their capabilities.

\paragraph{Acknowledgments}

We acknowledge the support of AFRL and DARPA via FA8750-23-2-1015, ONR via N00014-23-1-2368, NSF via IIS-1909816, IIS-1955532, IIS-1956330, and NIH R01GM140467. We also acknowledge the support of the T\"ubingen AI Center and the Robert H. Topel Faculty Research Fund at the University of Chicago Booth School of Business.

\bibliography{main}
\bibliographystyle{abbrvnat}

\newpage
\appendix
\onecolumn

\section{Proofs of the main results}
\label{sec: proofs}
In this appendix we provide the proofs of our results, in particular the proof of our main result, Theorem~\ref{thm: main}. However, as a warm-up we first start in Appendix~\ref{app:ivae_proof} with a proof of the simpler result that can be shown based on the iVAE approach. In Appendix~\ref{app:main_proof}
we prove Theorem~\ref{thm: main} and in Appendix~\ref{app:lemmas_proof} we prove the additional lemmas that appear in the paper.

\subsection{Proof of identifiability with $2n+1$ environments}\label{app:ivae_proof}
As a warm-up and to provide a connection to earlier results we show here how to obtain
identifiability by adapting the iVAE framework to our context.
Indeed, our mathematical setting is related to the setting used in \cite{khemakhem2020variational} in the sense that the  environments are generated by modulation with certain exponential families.
 Therefore, we can essentially apply their proof techniques to prove identifiability (with some modifications), albeit this requires the suboptimal number of $2m+1$ environments (there are two sufficient statistics
    for the Gaussian distribution).
\begin{theorem}\label{thm:ivae}
Suppose data satisfies Assumption \ref{as:mixing}, \ref{as: lin_ind_atomic_concepts}, and
\ref{as:gaussianity} and the environment statistics matrix  $\Lda$ has rank $2n$.
Assume we know the number of atoms $n$. Then identifiability in the sense of Definition~\ref{def:identifiability} holds.
\end{theorem}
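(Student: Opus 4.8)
The plan is to adapt the iVAE identifiability argument of \citet{khemakhem2020variational} to our conditional (rather than interventional) exponential-family setup. The starting observation is the computation behind Lemma~\ref{lem: logodds}: each concept conditional density $p_e$ is an exponential tilt of the shared base $p$ whose sufficient statistics are $\langle a_i,z\rangle$ and $\langle a_i,z\rangle^2$ for $i\le n$, i.e.\ exactly $2n$ of them, which is the origin of the $2n+1$-environment requirement. Writing $T(z)=(\langle a_1,z\rangle^2,\dots,\langle a_n,z\rangle^2,\langle a_1,z\rangle,\dots,\langle a_n,z\rangle)^\top$ and collecting the natural parameters into the rows of $\Lambda\in\RR^{m\times 2n}$ (built from the matrices $M$ and $B$), the lemma reads
\[
\ln p_e(z)-\ln p(z)=\langle \Lambda_e,T(z)\rangle+c_e \qquad \text{for all } e,z.
\]
Suppose now that $(\alt{f},\alt{p},\{\alt{A^e},\alt{b^e}\})$ is a second parameter set generating the same observational and concept conditional distributions $X^0,\dots,X^m$. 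First I would put $h=\alt{f}^{-1}\circ f$ and use the change-of-variables identity $p_e(z)=\alt{p}_e(h(z))\,\lvert\det Dh(z)\rvert$, valid for every $e$ including $e=0$ by Assumption~\ref{as:mixing}. Subtracting the $e=0$ equation from each $e\ge 1$ equation cancels both the Jacobian factor $\lvert\det Dh(z)\rvert$ and the base densities, leaving $\langle \Lambda_e,T(z)\rangle+\mathrm{const}=\langle \alt{\Lambda}_e,\alt{T}(h(z))\rangle+\mathrm{const}$ for all $e,z$. Stacking over $e$ and left-inverting $\Lambda$ (which has rank $2n$ by hypothesis) gives
\[
T(z)=L\,\alt{T}(h(z))+\kappa
\]
for a fixed $L\in\RR^{2n\times 2n}$ and constant $\kappa$; by the symmetric argument (or because the left-hand side has full-rank Jacobian) $L$ is invertible, so $T$ and $\alt{T}\circ h$ span the same space of functions of $z$.

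The technical heart, and the step I expect to be the main obstacle, is to upgrade this relation to the statement that $h$ is affine and that the atoms are matched up to permutation and nonzero rescaling. Following the iVAE recipe, I would differentiate $T(z)=L\,\alt{T}(h(z))+\kappa$ in $z$ and exploit that the Jacobian of the Gaussian sufficient statistics $z\mapsto(\langle a_i,z\rangle^2,\langle a_i,z\rangle)_i$ is non-degenerate --- this is exactly where Assumption~\ref{as: lin_ind_atomic_concepts} (linear independence of the $a_i$) and Assumption~\ref{as:gaussianity} enter --- comparing second derivatives to force $Dh$ to be constant, hence $h(z)=Rz+r$. Matching the quadratic coefficients then forbids any cross term $\langle\alt{a}_j,h(z)\rangle\langle\alt{a}_k,h(z)\rangle$ with $j\ne k$, which by linear independence collapses $L$ to a permuted family of $1\times 1$ blocks, so $\langle \alt{a}_{\pi(i)},h(z)\rangle=\lambda_i\langle a_i,z\rangle+\mu_i$ for some permutation $\pi$, nonzero $\lambda_i$, and scalars $\mu_i$; the ratios $M_{ei}/\alt{M}_{e\pi(i)}$ then pin down $\lambda_i^2$, and the linear terms fix $\mu_i$ together with the shift.

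Finally I would translate ``$h$ affine, atoms matched'' back through $h=\alt{f}^{-1}\circ f$: set the linear map of Definition~\ref{def:identifiability} to be $T:=R^{-1}$, read off the permutation matrices $P^e$ and the invertible diagonal matrices $\Lambda^e$ from $\pi$ and the $\lambda_i$, take $w$ from $r$, and check \eqref{eq:ident1}--\eqref{eq:ident2} by direct substitution. Points requiring care: the open subset on which $h$ is a genuine diffeomorphism (needed in order to differentiate), a possible mismatch between $d_z$ and $\alt{d_z}$ (absorbed by restricting attention to $\operatorname{span}(a_1,\dots,a_n)$, since $p$ and $\alt{p}$ are otherwise unconstrained), and the measure-zero locus where some $\langle a_i,z\rangle$ vanishes, which is handled by continuity.
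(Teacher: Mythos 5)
Your proposal is correct and follows essentially the same route as the paper's proof: difference the log-densities against the base distribution to cancel the Jacobian, invert the rank-$2n$ statistics matrix to relate the two families of Gaussian sufficient statistics, and use the degree/derivative argument together with linear independence of the atoms to collapse the relation to a permutation with nonzero scalings, from which $T$, $P^e$, $\Lambda^e$ and $w$ are read off. One small overstatement: the relation $T(z)=L\,\alt{T}(h(z))+\kappa$ only constrains $h$ through the concept evaluations $\langle\alt{a}_j,h(z)\rangle$, so it forces those evaluations (not $Dh$ itself) to be affine in $z$ --- the transition map can be arbitrarily nonlinear off $\operatorname{span}(a_1,\dots,a_n)$ --- but you flag this restriction yourself and the paper's proof likewise only establishes affineness of the concept evaluations.
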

We remark that the rank condition can only be satisfied for $2n+1$ environments (observational distribution and $2n$ concept conditional distributions.
For this theorem the assumption that the filtering distribution is always the same is not necessary. Instead we could consider variances $(\sigma_k^e)^2$ depending on environment $e$ and row $k$,
i.e., the filtering distribution $q_{(\sigma_k^e)^2}$ is Gaussian with varying variance.
The generalization of the   environment-concept matrix $M\in \RR^{m\times n}$ is given by
\begin{align}\label{eq:env_conc2}
    M_{ei} = \begin{cases}
        \frac{1}{(\sigma^e_k)^2} &\text{if $i\in S^e$ and row $k$ of $A^e$ is $a_i$}
        \\
        0&\text{otherwise.}
    \end{cases}
\end{align}
Similarly the generalization of the environment-valuation matrix $B\in \RR^{m\times n}$ is given by
\begin{align}\label{eq:env_valuation2}
    B_{ei} = \begin{cases}
        \frac{b^e_{k}}{(\sigma^e_k)^2} &\text{if $i\in S^e$ and row $k$ of $A^e$ is $a_i$,}
        \\
        0&\text{otherwise.}
    \end{cases}
\end{align}

We now prove Theorem~\ref{thm:ivae}. We use essentially the same ideas as in the proof of Theorem 1 in
\citet{khemakhem2020variational} (followed by the same reasoning as in \citet{sorrenson2020disentanglement,kivva2022identifiability} but since our concepts are not axis aligned and we only extract some information about the mixing we give a complete proof.
\begin{proof}[Proof of Theorem~\ref{thm:ivae}]

Suppose there are $2$ sets of parameters that generate the same data $X^0, X^1, \ldots, X^m$. Denote by $\alt{.}$ the latter set of parameters, e.g., $X^e$ is distributed as $\alt{f}(\alt{Z}^e)$ where $\alt{Z}^e \in \RR^{\alt{d_z}}$ corresponds to the concept class $\alt{C^e}$ with distribution $\alt{Z}^e \sim \alt{p}^e$ and the same distribution is generated by $f(Z^e)$
where $f$ and $\alt{f}$ are injective and differentiable.
Let $\calC = \{a_1, \ldots, a_n\}$ be the set of atomic concepts in the first setting and let $\alt{\calC} = \{\alt{a}_1, \ldots, \alt{a}_n\}$ be the set of atomic concepts in the second setting (here we use that
$n$ is assumed to be known).
We also consider the transition function $\p=\alt{f}^{-1}f$ and in the following we always
write $\alt{Z}=\p(Z)$. The equality
$f(Z^e)\indistribution X^e\indistribution \alt{f}(\alt{Z}^e)$ implies $\p(Z^e)\indistribution \alt{Z}^e$.
This implies that for all environments $e$
\begin{align}
     p^e(Z) = |\det J_{\p^{-1}}| \cdot \alt{p}^e(\alt{Z})
\end{align}
Taking the logarithm and subtracting this for some $e=1,\ldots, m$ from the base distribution we obtain
\begin{align}\label{eq:eqality_log_dens_diff}
   \ln(p(Z))- \ln(p^e(Z)) = \ln(\alt{p}(\alt{Z}))-\ln(\alt{p}^e(\alt{Z})).
\end{align}
Using the definition \eqref{eq:concept_measure} we can rewrite for some constants $c_e$ and $c'_e$
\begin{align}
\begin{split}
  \ln(p(Z))-   \ln(p^e(Z)) &=  \sum_{k=1}^{\dim(C_e)} \frac{(A^e Z^e-b^e)_k^2}{2(\sigma^e_k)^2} - c'_e
  \\
  &= \sum_{i=1}^n \left(\frac{1}{2} M_{ei} \langle a_i, Z^e\rangle^2 - B_{ei} \langle a_i,Z^e\rangle\right)-c_e.
\end{split}
\label{eqn: logodds}
\end{align}
Here we used the environment-concept matrix and the environment-valuation matrix in the second step which were
defined in \eqref{eq:env_conc} and \eqref{eq:env_valuation} (in \eqref{eq:env_conc2} and
\eqref{eq:env_valuation2} for varying variance).
We define the vector $\bs{p}(Z)$ with components $\bs{p}_e(Z)= \ln(p(Z))-   \ln(p^e(Z))$.
Then we find the relation
\begin{align}
    \bs{p}(Z)=
    \frac12 M
    \begin{pmatrix}
       \langle a_1, Z\rangle^2
       \\
       \vdots
       \\
       \langle a_n, Z\rangle^2
    \end{pmatrix}
    -B
    \begin{pmatrix}
       \langle a_1, Z\rangle
       \\
       \vdots
       \\
       \langle a_n, Z\rangle
    \end{pmatrix}.
\end{align}
Together with \eqref{eq:eqality_log_dens_diff} we conclude that
\begin{align}\label{eq:eq_log_dense}
      \frac12 M
    \begin{pmatrix}
       \langle a_1, Z\rangle^2
       \\
       \vdots
       \\
       \langle a_n, Z\rangle^2
    \end{pmatrix}
    -B
    \begin{pmatrix}
       \langle a_1, Z\rangle
       \\
       \vdots
       \\
       \langle a_n, Z\rangle
    \end{pmatrix}
    =
       \frac12 \alt{M}
    \begin{pmatrix}
       \langle \alt{a}_1, \alt{Z}\rangle^2
       \\
       \vdots
       \\
       \langle \alt{a}_n, \alt{Z}\rangle^2
    \end{pmatrix}
    -\alt{B}
    \begin{pmatrix}
       \langle \alt{a}_1, \alt{Z}\rangle
       \\
       \vdots
       \\
       \langle \alt{a}_n, \alt{Z}\rangle
    \end{pmatrix}
\end{align}
Since by assumption $\alt{\Lambda}=(\alt{M},\alt{B})\in \RR^{m\times 2n}$ has rank $2n$ there is a vector $v$
such that $v^\top \alt{M} = 0$ and $v^\top \alt{B} = -\bs{e}_i$ ($\bs{e}_i\in\RR^{d_z}$ denotes the $i$-th standard basis
vector).
Thus we find that
\begin{align}
    \langle \alt{a}_i,\alt{Z}\rangle =
     \frac12 v^\top {M}
    \begin{pmatrix}
       \langle {a}_1, {Z}\rangle^2
       \\
       \vdots
       \\
       \langle {a}_n, {Z}\rangle^2
    \end{pmatrix}
    -v^\top {B}
    \begin{pmatrix}
       \langle {a}_1, {Z}\rangle
       \\
       \vdots
       \\
       \langle {a}_n, {Z}\rangle
    \end{pmatrix}.
\end{align}
In other words $\langle \alt{a}_i,\alt{Z}\rangle$ can be expressed as a quadratic polynomial in ${Z}$.
We apply the same reasoning for $\langle \alt{a}_i,\alt{Z}\rangle^2$, i.e., pick a vector $v'$ such that $\frac12 v'^\top \alt{M}=\bs{e}_i$ and
$ v'^\top \alt{B} = 0$ to obtain a relation
\begin{align}\label{eq:square_exression}
     \langle \alt{a}_i,\alt{Z}\rangle^2 =
     \sum_j \eta_j \langle {a}_j,{Z}\rangle^2 + \ell({Z})
\end{align}
for some coefficients $\eta_j$ and some affine function $\ell$  of ${Z}$.
The following reasoning is now the same as in \citet{kivva2022identifiability, sorrenson2020disentanglement}.
 We thus find that $\langle \alt{a}_i,\alt{Z}\rangle$ and its square can be written as polynimials
 of degree at most $2$ in $Z$.
This implies that in fact $\langle \alt{a}_i,\alt{Z}\rangle$ is an affine function of ${Z}$ (otherwise its
square would be a quartic polynomial), i.e.,
we can write
\begin{align}\label{eq:linear_relation}
    \langle \alt{a}_i,\alt{Z}\rangle = \sum_j \lambda_j \langle {a}_j,{Z}\rangle + C_i
    = \langle \sum_j \lambda_j {a}_j,{Z}\rangle + C_i.
\end{align}
 Equating the square of this relation with \eqref{eq:square_exression}
and taking the gradient with respect to ${Z}$ (as a polynomial the function is differentiable)
we find
\begin{align}
   2 \sum_j \eta_j {a}_j \langle {a}_j, {Z}\rangle + w
   =  2\sum_j \lambda_j {a}_j \langle  \sum_j \lambda_j {a}_j, {Z}\rangle+w'
\end{align}
for two vectors $w$ and $w'$. The equality (for ${Z}=0$) implies $w=w'$.
Now linear independence of ${a}_j$ implies that for each $r$
\begin{align}
    \eta_{r} {a}_{r} = \lambda_{r}\sum_j \lambda_j {a}_j.
\end{align}
Applying linear independence again we conclude that either $\lambda_{r}=0$
or $\lambda_j=0$ for all $j\neq r$. This implies that there is at most one $r$ such that
$\lambda_{r}\neq 0$. The relation \eqref{eq:linear_relation}
and the bijectivity of $\p$ implies that there is exactly on $r(i)$ such that
$\lambda_{r(i)}\neq 0$ and therefore
\begin{align}
    \langle \alt{a}_i,\alt{Z}\rangle = \lambda_{r(i)} \langle {a}_{r(i)},{Z}\rangle + C_i.
\end{align}
Applying the same argument in the reverse direction we conclude that there is a permutation $\pi\in S_n$
such that
\begin{align}
    \langle \alt{a}_{\pi(i)},\alt{Z}\rangle = \lambda_{i} \langle {a}_{i},{Z}\rangle + C_i.
\end{align}
By linear independence we can find an invertible linear map $T$ such that
\begin{align}\label{eq:linear_a_a_tilde}
    \alt{a}_{\pi(i)}^\top = a_i^\top T^{-1}
\end{align}
(i.e, $T^\top\alt{a}_{\pi(i)}=a_i$) and a
vector $w\in \RR^{d_z}$ (the $a_i$ are linearly independent) such that
\begin{align}\label{eq:idenfity_atoms}
      \langle \alt{a}_{\pi(i)},\alt{Z}\rangle = \lambda_{i} (\langle {a}_{i},{Z}\rangle + \langle a_i,w\rangle).
\end{align}
In particular the relations \eqref{eq:ident_atoms1} and \eqref{eq:ident_atoms2} hold.
Now it is straightforward to see that if $i\in S^e$, i.e., $a_i$ is a row
of $A^e$ then $\alt{a}_{\pi(i)}$ is a row of $\alt{A}^e$ and vice versa. Indeed, this follows from \eqref{eq:eq_log_dense}
for environment $e$ together with \eqref{eq:idenfity_atoms} and linear independence of the atoms.
Therefore we conclude from \eqref{eq:linear_a_a_tilde} that there is a permutation $P^e$ such that
\begin{align}
    \alt{A}^e=P^e A^eT^{-1}.
\end{align}
Moreover, \eqref{eq:idenfity_atoms} then implies setting $Z=f^{-1}(x)$, $\alt{Z}=\alt{f}^{-1}(x)$
\begin{align}
    \alt{A}^e \alt{f}^{-1}(x) = \Lambda^eP^e  A^e (f^{-1}(x)+w)
\end{align}
holds for the same permutation matrix $P^e$ and a diagonal matrix $\Lambda^e$ whose diagonal entries can be related to \eqref{eq:idenfity_atoms}.
Let us  assume now that row $k$ of $A^e$ is $a_i$ and row $k'$ of $\alt{A}^e$
is $\alt{a}_{\pi(i)}$.
Now we consider the subspace $H\subset \RR^{d_z}$ containing all $Z$ such that $\langle Z,a_j\rangle=0$
for $j\neq i$. Via \eqref{eq:idenfity_atoms} this implies that $\langle \alt{a}_j, \alt{Z}\rangle$ is constant for $j\neq \pi(i)$.
Then we conclude from \eqref{eq:eq_log_dense}  that for $Z\in H$
\begin{align}
    \frac{(\langle a_i, Z\rangle - b^e_k)^2 }{2(\sigma^e_k)^2}
    =
    \frac{(\langle \alt{a}_{\pi(i)}, \alt{Z}\rangle - \alt{b}^e_{k'})^2 }{2(\alt{\sigma}^e_{k'})^2} +c^e_k
\end{align}
for some constant $c^e_k$.
Using \eqref{eq:idenfity_atoms} this implies that
\begin{align}
     \frac{(\langle a_i, Z\rangle - b^e_k )^2}{2(\sigma^e_k)^2}
    =
    \frac{(\lambda_{i} (\langle {a}_{i},{Z}\rangle + \langle a_i ,w\rangle) - \alt{b}^e_{k'})^2 }{2(\alt{\sigma}^e_{k'})^2} +c^e_k.
\end{align}
Comparing the quadratic term and the linear term (note that $\langle a_i, Z\rangle$ can take any value on $H$) we find
\begin{align}
\frac{1}{2(\sigma^e_k)^2}& = \frac{\lambda_i^2}{2(\alt{\sigma}^e_{k'})^2}\\
    -\frac{b^e_k}{2(\sigma^e_k)^2}
    &= -\frac{\lambda_i\alt{b}^e_{k'} - \lambda_i^2\langle a_i ,w\rangle }{2(\alt{\sigma}^e_{k'})^2}
\end{align}
Combining the equation we obtain
\begin{align}
    \alt{b}^e_{k'} = \lambda_i(b^e_k - \langle a_i ,w\rangle )
\end{align}
This implies then the relation
\begin{align}
    \alt{b}=\Lambda^e P^e(b+A^e w).
\end{align}

\end{proof}

\subsection{Proof of Theorem~\ref{thm: main}}\label{app:main_proof}
In this section we prove our main Theorem~\ref{thm: main}. The proof is structured in several steps:
First we remove the symmetries of the representation and derive the key relations underlying the proof. Then we show that we can identify the environment-concept matrix $M$ and then also  the valuations collected in $B$. Once this is done we can complete the proof.
We will need the following lemma to conclude the proof.
\begin{lemma}\label{le:equivalence}
    The relations \eqref{eq:ident1} and \eqref{eq:ident_atoms2} in Definition~\ref{def:identifiability} define an equivalence relation of representations if we assume that the underlying atoms form a linearly independent set.
\end{lemma}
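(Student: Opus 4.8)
\textbf{Proof plan for Lemma~\ref{le:equivalence}.}

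The plan is to verify reflexivity, symmetry, and transitivity for the relation on representations defined by requiring the existence of a shift $w$, permutation matrices $P^e$, invertible diagonal matrices $\Lambda^e$, and an invertible linear map $T$ satisfying \eqref{eq:ident1}, \eqref{eq:ident2}, \eqref{eq:ident_atoms1}, and \eqref{eq:ident_atoms2}. Throughout I would encode a representation by the tuple $(f, d_z, p, (A^e)_e, (b^e)_e)$ together with the distinguished set of linearly independent atoms $\calC = \{a_1,\dots,a_n\}$, and I would read off from \eqref{eq:ident_atoms1}--\eqref{eq:ident_atoms2} that the relevant data per pair of related representations is really the triple $(\pi, (\lambda_i)_i, T, w)$ with $\pi \in S_n$, $\lambda_i \neq 0$, $T$ invertible, $w \in \RR^{d_z}$, subject to $T^\top \tilde a_{\pi(i)} = a_i$ (equivalently \eqref{eq:ident_atoms1}) and the affine relation on evaluations.

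First I would do reflexivity: take $\pi = \id$, $\lambda_i = 1$, $T = \id$, $w = 0$, and $P^e = \id$, $\Lambda^e = \id$; then \eqref{eq:ident1}, \eqref{eq:ident2}, and the atom relations hold trivially. Next, symmetry: given $(\pi, (\lambda_i)_i, T, w)$ witnessing that the first representation is related to the second, I would produce the inverse witness $(\pi^{-1}, (\lambda_{\pi^{-1}(i)}^{-1})_i, T^{-1}, -Tw)$ — wait, more carefully, since \eqref{eq:ident_atoms2} reads $\langle \tilde a_{\pi(i)}, \tilde f^{-1}(x)\rangle = \lambda_i(\langle a_i, f^{-1}(x)\rangle + \langle a_i, w\rangle)$, inverting gives $\langle a_i, f^{-1}(x)\rangle = \lambda_i^{-1}\langle \tilde a_{\pi(i)}, \tilde f^{-1}(x)\rangle - \langle a_i, w\rangle$, which I would rewrite in the required form with the roles of tilde/untilde swapped; the shift in the reverse direction is obtained by applying $T$ to $-w$ and using $T^\top \tilde a_{\pi(i)} = a_i$ to keep the pairing consistent. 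The permutation/diagonal matrices $P^e, \Lambda^e$ invert straightforwardly since permutation matrices and invertible diagonal matrices form groups, and one checks the index sets $S^e$ are preserved (as already argued around \eqref{eq:eq_log_dense} in the $2n+1$ proof). Finally, transitivity: compose two witnesses $(\pi_1, (\lambda^{(1)}_i)_i, T_1, w_1)$ and $(\pi_2, (\lambda^{(2)}_i)_i, T_2, w_2)$ to get $(\pi_2\circ\pi_1, (\lambda^{(2)}_{\pi_1(i)}\lambda^{(1)}_i)_i, T_2 T_1, w_1 + \text{(correction)})$, where the additive shift composes as in an affine group — I would compute it by substituting one affine evaluation relation into the other and collecting the constant term; similarly $P^e_{\text{comp}} = P^e_2 P^e_1$ and $\Lambda^e_{\text{comp}} = \Lambda^e_2 (P^e_2 \Lambda^e_1 (P^e_2)^{-1})$ or the appropriate conjugated product, which one verifies is again a permutation times invertible diagonal.

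The main obstacle I anticipate is bookkeeping the shift $w$ under composition and inversion: because the shift enters through $f^{-1}(x) + w$ inside a linear map that also gets permuted and rescaled, the naive guess $w_1 + w_2$ is wrong, and one has to track how $w$ transforms under $T$ and the diagonal scalings to land exactly on the form \eqref{eq:ident1}--\eqref{eq:ident2}. Linear independence of the atoms is what makes this well-defined: it guarantees that the map $T$ (hence $w$ modulo the kernel of the atom-pairing) is uniquely determined by \eqref{eq:ident_atoms1}, so the composed/inverted data is unambiguous, and it is also what forced the relation to be ``at most one nonzero $\lambda_r$'' in the earlier argument, ensuring $\pi$ is a genuine permutation so the composition $\pi_2 \circ \pi_1$ makes sense. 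Once the affine-group structure of the $(\pi, \lambda, T, w)$ data is spelled out, all three axioms follow, and I would present the shift computation as the one genuinely non-routine step.
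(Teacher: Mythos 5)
Your proposal is correct and follows essentially the same route as the paper: verify reflexivity trivially, and for symmetry and transitivity first invert/compose the witnesses at the level of atoms and then lift to the concepts $C^e$ via the group structure of permutation and invertible diagonal matrices. The one point you flag as delicate --- the reverse shift --- is exactly where the paper uses linear independence: rather than an explicit formula like $-Tw$ (which misses the per-atom factors $\lambda_i$), it defines $\alt{w}$ implicitly by the solvable system $\langle a_i, w\rangle = -\lambda_i^{-1}\langle \alt{a}_{\pi(i)}, \alt{w}\rangle$, which is precisely the resolution your plan anticipates.
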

The proof of this lemma can be found in Appendix~\ref{app:lemmas_proof}.
\begin{remk}
    Without the assumption on the underlying atoms the lemma is not true. In this case a slightly different scaling must be chosen (e.g., $(\Lambda^e)^{-1}\alt{b}^e=\Lambda^e P^eb^e-P^eA^ew$
    instead of $\alt{b}^e=\Lambda^e P^e(b^e-A^ew)$). Since our
    results address the case of atoms we used the simpler definition in the main paper.
\end{remk}
We can allow slightly more general filtering distributions where $q$ is Gaussian with
variance $\sigma_i^2$ if we filter on concept $i$, i.e., the variance needs to be constant
for different environments and the same atom but might depend on the atom.
The proof will cover this case, the simple case stated in the main paper is obtained by setting $\sigma_i^2=\sigma^2$. Some steps of the proof (e.g., the expressions for the difference of the log-densities) agree with the proof of Theorem~\ref{thm:ivae}. To keep the proof self contained we
repeat a few equations.

\begin{proof}[Proof of Theorem~\ref{thm: main}]

We proceed in several steps.

\paragraph{Step 1: Reduction to standard form.}
Let us first transform every possible data representation into a standard form.
Recall that we have the set of atomic concepts $\calC=\{a_1,\ldots, a_n\}$.
Recall that we defined the environment-concept matrix $M\in \RR^{m\times n}$
in \eqref{eq:env_conc} and note that the natural generalisation reads
\begin{align}
    M_{ei}=\begin{cases}
        \frac{1}{\sigma_i^2}\quad \text{if $a_i$ is a row of $A^e$,}
        \\
        0\quad \text{otherwise.}
    \end{cases}
\end{align}
We say that concept $a_n$ is conditioned on the environment $e$.
Note that the nonzero entries of row $e$ of $M$ encode the set $S^e$.
To pass from $A^e$ to its rows $a_i$ we assume that the $e$-th row of $A^e$ is $a_{i^e_j}$,
i.e., $a_{i^e_j} =(A^e)^\top e_j$.
Recall also consider the environment-valuation matrix $B$ which is given by
\begin{align}
    B_{ei}=\begin{cases}
        \frac{b^e_k}{\sigma_i^2}\quad \text{if $a_i$ is row $k$ of $A^e$,}
        \\
        0\quad \text{otherwise.}
    \end{cases}
\end{align}
Denoting by $q_{\sigma^2}$ the centered Gaussian distribution with variance $\sigma^2$ we find
 in environment $e$
\begin{align}
\begin{split}
  \ln(p(Z))-   \ln(p^e(Z)) &=  -\sum_{k=1}^{\dim(C_e)} \ln q_{(\sigma_k^e)^2}((A^e Z^e-b^e)_k)
  =\sum_{k=1}^{\dim(C_e)} \frac{(A^e Z^e-b^e)_k^2}{2(\sigma^e_k)^2} - c'_e
  \\
  &= \sum_{i=1}^n \frac{1}{2} M_{ei} \langle a_i, Z^e\rangle^2 - B_{ei} \langle a_i,Z^e\rangle-c_e.
\end{split}
\label{eq:log_density1}
\end{align}
Now we consider an invertible linear map $T:\RR^{d_z}\to \RR^{d_z}$ such that $T^{-\top} a_i= e_i$
for all $1\leq i\leq n$.
Such a map exists because we assume that the $a_i$ are linearly independent.
Moreover, we consider a shift vector $\lambda
\in \RR^{d_z}$ with $\lambda_i=0$ for $i>n$ which we fix later.
We define $\Sigma\in \RR^{d_z\times d_z}$ to be the diagonal matrix with entries $\Sigma_{ii}=\sigma_i$
for $1\leq i\leq n$ and $\Sigma_{ii}=1$ for $i>n$.
Now we consider the linear map $L(z)=\Sigma^{-1} Tz-\lambda$ and a  new representation given by
\begin{align}
\begin{split}
    \alts{z}=L(z), \quad \alts{f}=f\circ L^{-1}, \quad \alts{\calC}=\{\bs{e}_1,\ldots, \bs{e}_n\},
   \quad \alts{\sigma}_i = 1,\quad \alts{A}^{e}=A^eT^{-1}, \quad \alts{p}(\alt{z}) = p(L^{-1}\alt{z})|\det T^{-1}|.
   \end{split}
\end{align}
We also define
\begin{align}
    \alts{b}^e_k = \frac{b^e_k}{\sigma_i} - \lambda_i \quad \text{if row $k$ of $A^e$ is $a_i$.}
\end{align}
Define $\alts{M}$ and $\alts{B}$ in terms of $\alts{A}^e$, $\alts{b}^e$ and $\alts{\sigma}_i^2$ as before.
We remark that all entries of $\alts{M}$ are either 0 or 1
and note that
\begin{align}\label{eq:altsM}
    \alts{M}= M\diag(\sigma_1^2, \ldots, \sigma_n^2)\\
    \label{eq:altsB}
    \alts{B} = B\diag(\sigma_1^{-1},\ldots, \sigma_n^{-1}) - M\diag(\lambda_1,\ldots, \lambda_n).
\end{align}
We claim that this model generates the same observations as the original model.
By definition $L_\ast p = \alts{p}$ (as mentioned before, we slightly abuse notation and here refer to the distributions). Next, we calculate for any $\delta$
\begin{align}
\begin{split}
    -2\ln q_1(\langle \bs{e}_i, L(z)\rangle - \delta)
    &= (\langle \bs{e}_i, L(z)\rangle -\delta)^2
    \\
    &= (\langle \bs{e}_i, \Sigma Tz-\lambda\rangle - \delta)^2
    \\
    &= (\sigma_i^{-1}\langle T^\top \bs{e}_i, z\rangle-\lambda_i - \delta)^2
    \\
    &=\frac{(\langle a_i, z\rangle-\sigma_i\lambda_i - \sigma_i\delta)^2}{\sigma_i^2}
    \\
    &=-2 \ln q_{\sigma_i^2} (\langle a_i, z\rangle-\sigma_i\lambda_i - \sigma_i\delta).
\end{split}
\end{align}
Using this for $\delta = \alts{b}^e_k$ and some $k$ such that row $k$ of $A^e $ is $a_i$ we find
\begin{align}
      -2\ln q_1(\langle \bs{e}_i, L(z)\rangle -\alts{b}^e_k)
    &=-2 \ln q_{\sigma_i^2} (\langle a_i, z\rangle-\sigma_i\lambda_i - \sigma_i\alts{b}^e_k)
    =-2 \ln q_{\sigma_i^2} (\langle a_i, z\rangle-{b}^e_k).
\end{align}

\noindent
This then implies that for $\alt{z}=L(z)$
\begin{align}
    \prod_k q_1 ( (\alt{A}^{e} \alt{z}-\alt{b}^{e})_k)
    \propto \prod_k q_{\sigma_{k}^e}\left(\left( A^ez-b^{e}\right)_k\right).
\end{align}

Combining this with the  definition \eqref{eq:concept_measure}
and the definition $\alts{p}(\alt{z}) = p(L^{-1}\alt{z})|\det T^{-1}|$
we find that for $\alts{z}=L(z)$
\begin{align}
    \alts{p}^{e}(\alt{z})\propto p^{e}(z)
\end{align}
and thus $\alts{f}(\alts{Z}^{e})\indistribution f(Z^{e})\indistribution X^{e}$.
Moreover, one directly sees that
the two representations are also equivalent in the sense of Definition~\ref{def:identifiability}.
We now fix the vector $\lambda$ such that each row of $\alts{B}$ has mean zero.
Finally, by changing the sign of $\alt{z}_i$ we can in addition assume that for every $i$ the first non-zero $\alts{B}_{ei}$ is positive.
Finally we remark that Assumption~\ref{ass:div} is still satisfied for $\alts{M}$ and $\alts{B}$.
Indeed, $w^\top M=0 $ implies $w^\top \alts{M}=0$ by \eqref{eq:altsM}.
But then $w^\top \alts{B}=w^\top B\diag(\sigma_1^{-1},\ldots, \sigma_n^{-1})$
by \eqref{eq:altsB}
which has all entries different from zero if this holds for $w^\top B$.
In the following we will therefore always assume that the representation satisfies the
properties of the $\alts{Z}$ variables and we remove the modifier in the following.
The plan is now to show that $M$ and $B$ can be identified up to permutations of the rows (under the fixed normalization we derived in this step) and then show that every two representations
with the same $M$ and $B$ can be identified.

\paragraph{Step 2: The key identity}
Let us here restate the key identity based on the difference of the log-densities.
As is common in identifiability results for multi-environment data with general mixing we consider the
difference in log densities. Consider
\begin{align}\label{eq:delta_log_probs}
\begin{split}
   \ln p^{0}(z) - \ln p^{e}(z)
  & =\sum_{i=1}^n \frac12{M}_{ei}\langle \bs{e}_i, z\rangle^2 -{B}_{ei}\langle \bs{e}_i, z\rangle - c'_e
   \\
   &=\sum_{i=1}^n \frac12{M}_{ei}z_i^2 -{B}_{ei}z_i- c'_e
  \end{split}
\end{align}
for some constant $c'_e$.
Those functions will play a crucial role in the following and we will denote
\begin{align}\label{eq:defOfg}
    g^e(z)=\ln p^{0}(z) - \ln p^{e}(z)
\end{align}
Note that since the log-density changes only by the Jacobian for pushforward measures we find that
\begin{align}
    g^e(z)=\ln p^{0}(z) - \ln p^{e}(z)
    =\ln p^{0}_X(f(z)) - \ln p^{e}_X(f(z))= G^e(f(z)) = G^e(x).
\end{align}
Note that the functions $G^e(x)$ can be estimated from the distributions of $X^e$.
We remark $X$ might be supported on a submanifold if $d_z$ and $d_x$ do not agree making the definition of the density subtle. But we can just consider any chart locally and consider the density of the
pushforward with respect to the Lebesgue measure. The resulting difference expressed in $G^e$ will be independent of the chart as the determinant cancels thus $G^e$ is a well defined function.
The relation
\begin{align}
    g^e(z)=G^e(f(z))=G^e(x)
\end{align}
will be crucial in the following because it shows that properties of $g^e$ are closely linked to the identifiable functions $G^e$.

\paragraph{Step 3: Identifiability of  environment-concept matrix}
Let us now show that we can identify which concepts are contained in which environment (up to relabeling of the concepts).
Recall that $S^e=\{i\in [n]: \text{$a_i$ is a row of $A^e$ } \} $
and we similarly define $S_T=\bigcup_{e\in T} S^e$ for all subsets $T\subset [m]$.
The main observation is that we can
identify $|S_T|=|\bigcup_{e\in T} S^e|$ for all subsets $T\subset [m]$.
To show this we consider the set
\begin{align}
    I_T = \argmin_{z} \sum_{e\in T} g^e(z).
\end{align}
Note that the function $g^e$ are convex functions, and they can be decomposed  as
sums of functions in $z_i$, i.e., for some functions $h^T_i$
\begin{align}
    \sum_{e\in T} g^e(z) = \sum_{i=1}^n h^T_i(z_i).
\end{align}
Now if $i\in S_T$ then $i\in S^e$ for some $e$ and thus $M_{ei}\neq 0$ for the $e$ and $h^T_i$
is the sum of quadratic function in $x_i$ which as a strictly convex function has a unique minimum
$z_i^T$.
On the other hand, if $i\notin S_T$ then $i\notin S^e$ for $e\in T$ and thus $M_{ei}=0$ for all
$e\in T$ and $h^T_i(z_i)=0$. Thus we conclude that
\begin{align}\label{eq:IT}
    I_T = \{ z\in \RR^{d_z}:\, z_i=z_i^T \text{ for $i\in S_T$}\}.
\end{align}
This is an affine subspace of dimension $d_z-|S_T|$.
The relations $G^e(f(z))=g^e(z)$ imply that
\begin{align}
    f(I_T)=\argmin_{x} \sum_{e\in T} G^e(x).
\end{align}
Note that $G^e(x)$ is identifiable from the datasets $X^e$ and thus the submanifold (by assumption on $f$)
$f(I_T)$ is identifiable
and by finding its dimension we obtain $d_z-|S_T|$. Since $d_z$ is the dimension of the data manifold $f(X)$
we can indeed identify $|S_T|$ for all $T\subset [m]$.
In particular, the total number of atomic concepts $n=|S_{[m]}|$ is identifiable (assuming that all atomic concepts are filtered upon at least once).
Now, it is a standard result that we can identify the matrix $M$ up to permutation of the
atomic concepts. %
Indeed, we can argue by induction in $m$ to show this. For $m=1$ we just have $|S^1|$ atomic concepts appearing
in environment $1$ and $n-|S^1|$ concepts not appearing.
For the induction step $m\to m+1$ we consider the sizes $|S_{T\cup \{m+1\}}|$ for $T\subset [m]$.
Applying the induction hypothesis we can complete $M_{ei}$ for all columns such that $M_{m+1,i}=1$.
Similarly, we can consider the sizes $|S_{T}|-|S_{T\cup \{m+1\}}|$ to identify
the matrix $M$ for concepts not used in environment $m+1$.

Thus, we can  and will assume after permuting the atomic concepts that $M$ is some fixed matrix.

\paragraph{Step 4: Identifiability of concept valuations}
Next, we show that we can also identify the matrix $B$.
We do this column by column, i.e., for one atomic concept after another.
Assume we consider atomic concept $i$.
Then we consider the
set $T_i=\{e: M_{ei}=0\}$ of concepts that not filter on atomic concept $i$.
By Assumption \ref{ass:div2} there is for every $i'\neq i$ an environment $e$ such that
$i'$ is filtered on, i.e.,
$M_{ei'}\neq 0$. This implies $S_{T_i}=[n]\setminus \{i\}$. Then we consider as in \eqref{eq:IT}
the set $I_{T_i}$ given by
\begin{align}
\label{eq:ITi}
     I_{T_i} = \{ z\in \RR^{d_z}:\, z_{i'}=z_{i'}^{T_i} \text{ for $i'\in [n]\setminus \{i\}$}\}.
\end{align}
Note that all $z_{i'}$ for $i\neq i'$ are constant on $I_{T_i}$.
Thus we find for any environment $e$  such that $i\in S^e$.
\begin{align}
\begin{split}\label{eq:ge_onIT}
    g^{e}(z)&=
    \sum_{j=1}^n \frac12 {M}_{ej}z_j^2 -{B}_{ej}z_j - c'_e
   \\
   &=
    \sum_{j\neq i}^n\frac12  {M}_{ej}z_j^2 -{B}_{ej}z_j - c'_e
    + \frac12 z_i^2 - B_{ei} z_i\\
    &=c_{T_i,e}+ \frac{1}{2}z_i^2-B_{ei}z_i
\end{split}
\end{align}
on $I_{T_i}$
for some constant $c_{T_i}$.

Now we consider two concepts $e_1 \neq e_2$ such that atomic concept $i$ is contained in these two environments. Then we consider
the set
\begin{align}
    I_{T_i}^{e_1} = \argmin_{z\in I_{T_i}}g^{e_1}(z)
    =  \{ z\in \RR^{d_z}:\, z_{i'}=z_{i'}^{T_i} \text{ for $i'\in [n]\setminus \{i\}$,
    $z_i=B_{e_1 i}$}\}
    .
\end{align}
Note that in the second equality we used that $g^{e_1}(z)$ depends on $z_i$ through
$z_i^2/2-B{e_1i}z_i$ so it is minimized at $B_{e_1i}$.
Now we find using \eqref{eq:ge_onIT}
\begin{align}
    \begin{split}
        \min_{z\in I_{T_i}^{e_1}} g^{e_2}(z) - \min_{I_{T_i}} g^{e_2}(z)
        &= \min_{z\in I_{T_i}^{e_1}} c_{T_i,e_2}+ \frac12 z_i^2 - B_{e_2i}z_i - \min_{I_{T_i}} \left(c_{T_i,e_2}+ \frac12 z_i^2 - B_{e_2i}z_i\right)
       \\
       &=  c_{T_i,e_2}+ \frac{1}{2} B_{e_1i}^2 - B_{e_1i}B_{e_2i} - \left(c_{T_i,e_2}+ \frac{1}{2}B_{e_2i}^2-B_{e_2i}^2\right)
      \\
      &=
      \frac{(B_{e_1i}-B_{e_2i})^2}{2}.
    \end{split}
\end{align}
As before, this quantity is identifiable from observations because $f(T_i)$ can be identified
and we can minimize $G^{e_2}(x)$ over $f(T_i)$.

This allows us to identify $B_{e_1i}-B_{e_2i}$ up to a sign. However, we can evaluate this expression over all pairs $e_1$ and $e_2$ and pick the one with the maximal difference. Then  all remaining
values $B_{ei}$ for $e$ such that $i$ is filtered on in $e$ must satisfy $B_{ei}\in [B_{e_1i},B_{e_2i}]$.
Together with identifiability of $|B_{ei}-B_{e_1i}|$ this allows us to identify all $B_{ei}$ up to
one sign indeterminacy and a constant shift. However, in the first step we ensured that
$\sum_{e} B_{ei}=0$ for all $i$ which determines the shift and the sign is fixed by our choice of making the first non-zero entry positive.
Thus, we can assume that our two representations have the same $M$ and $B$.

\paragraph{Step 5: Identifiability of concepts}
We are now ready to prove our identifiability result.

Assume we have two representations $Z^e$, $f$, $p$ and $\alt{Z}^e$, $\alt{f}$, and $\alt{p}$
such that the corresponding environment-concept and environment-valuation matrices agree, i.e.,
$M=\alt{M}$ and $B=\alt{B}$.
We consider the transition function $\p=\alt{f}^{-1}\circ f$ which is by assumption differentiable.
What we want to show is that $\p(z)_i = z_i$ for all $z\in \RR^{d_z}$ and $1\leq i\leq n$.
We now decompose $z=(z^c,z^o)$ into the concept part and the orthogonal part. We fix  $z^o\in \RR^{d_z-n}$ and define the function $\iota^o(z^c)=(z^c,z^o)$, the projection $\pi^c((z^c,z^o))=z^c$, and  $\p^o:\RR^n\to \RR^n$ given by
$\p^o(z^c)_i=\p(\iota^o(z^c)=\p((z^c,z^o))_i$.
Note that $\p^o$ is differentiable but not necessarily injective.
Let us denote by $\boldsymbol{g}: \RR^{d_z}\to \RR^m$ the function with coordinates $\boldsymbol{g}_e=g^e$
and similarly we define $\boldsymbol{G}:M\to \RR^d$.
Identifiability will be based on the crucial relation
\begin{align}\label{eq:bold_g}
    \boldsymbol{g}(\iota^o(z^c))=\boldsymbol{G}(f(\iota^o(z^c)))=
    \boldsymbol{G}(\alt{f}(\p^o(z^c)))=\boldsymbol{g}(\p^o(z^c)).
\end{align}
Here we used in the last step that $g^e$ is defined in terms of $M$ and $B$ and thus agrees for both
representations.
Note that $\boldsymbol{g}$ is just a quadratic function.
Differentiating we obtain
\begin{align}
    D_i  g^e(z)= M_{ei}z_i-B_{ei}.
\end{align}
Concisely this can be written as
\begin{align}
    D\boldsymbol{g}= M \diag(z_1,\ldots,z_n) - B.
\end{align}
Differentiating \eqref{eq:bold_g} we find
\begin{align}\label{eq:deriv_relation}
    M \diag(z_1,\ldots,z_n) - B = (M \diag(\alt{z}_1,\ldots,\alt{z}_n) - B) D\p^o(z^c).
\end{align}
Let $v$ be a vector as in Assumption~\ref{ass:div}. Denote by $M^+\in \RR^{n\times m}$ the pseudoinverse of $M$ which has rank $n$
because $M$ has. We consider the matrix $\alt{M^+}\in \RR^{n+1\times m}$ given by
\begin{align}
    \alt{M^+} = \begin{pmatrix}
        M^+\\
        v^\top
    \end{pmatrix}
\end{align}
Let us multiply the relation \eqref{eq:deriv_relation} by $\alt{M^+}$ and find that
\begin{align}\label{eq:relation_multiplied}
    \begin{pmatrix}
        z_1 & & 0
        \\
        &\ddots &
        \\
        0&& z_n
        \\
        0&\ldots & 0
    \end{pmatrix}
    -\alt{M^+} B =\left(
     \begin{pmatrix}
        \alt{z}_1 & & 0
        \\
        &\ddots &
        \\
        0&& \alt{z}_n
        \\
        0&\ldots & 0
    \end{pmatrix}
    -\alt{M^+} B \right)D\p^o(z^c)
\end{align}
Note that the first $n$ rows of the left hand side are $\diag(z_1,\ldots,z_n)-M^+B$.
This matrix is invertible for almost all values of $z^c=(z_1,\ldots,z_n)^\top$ because its determinant is a
non-zero  polynomial (the coefficient of the term $z_1\cdot\ldots z_n$ is 1) which vanishes only on a set of measure zero.
Outside of this set the left hand side of has rank $n$. Then the equality \eqref{eq:relation_multiplied}
implies that also the right hand side has rank $n$ and thus $D\p^o(z^c)$ has rank $n$ and thus is invertible.
For $z^c$ outside of this set there is up to scaling a unique vector $w\neq 0$ (depending on $z_1,\ldots, z_n$ such that
\begin{align}\label{eq:relation_z}
    w^\top\left(  \begin{pmatrix}
        z_1 & & 0
        \\
        &\ddots &
        \\
        0&& z_n
        \\
        0&\ldots & 0
    \end{pmatrix}
    -\alt{M^+} B \right)=0
\end{align}
From \eqref{eq:relation_multiplied} we conclude using the invertibility of $D\p^o(z^c)$
that
\begin{align}\label{eq:relation_ztilde}
     w^\top\left(  \begin{pmatrix}
        \alt{z}_1 & & 0
        \\
        &\ddots &
        \\
        0&& \alt{z}_n
        \\
        0&\ldots & 0
    \end{pmatrix}
    -\alt{M^+} B \right)=0.
\end{align}
Next, we claim that for almost all values of $z^c$ the vector $w$ has all entries different from 0 (this property is invariant under rescaling). Actually we need this only for entries 1 to $n$ but the case $n+1$
is a bit simpler so we show it first.
We show this by proving that for each entry $w_i$ there is only a null set of $z^c$ such that $w_i=0$.
Let $w=(w',0)$ for some $w'\in \RR^n$ and $w'\neq 0$, i.e., $w_{n+1}=0$. Then
\begin{align}
   0= w^\top\left(  \begin{pmatrix}
        z_1 & & 0
        \\
        &\ddots &
        \\
        0&& z_n
        \\
        0&\ldots & 0
    \end{pmatrix}
    -\alt{M^+} B \right)=
    w'^\top (\diag(z_1,\ldots,z_n) -M^+B)
\end{align}
But this implies that $\diag(z_1,\ldots,z_n) -M^+B$ has non-trivial kernel, i.e., does not have full rank
and we have seen above that this happens only for a subset of measure 0 of all $z^c$.
Next we show that the same is true if $w_1=0$. Decompose $0\neq w=(0,w')$.
Then we find
\begin{align}
       0= w^\top\left(  \begin{pmatrix}
        z_1 & & 0
        \\
        &\ddots &
        \\
        0&& z_n
        \\
        0&\ldots & 0
    \end{pmatrix}
    -\alt{M^+} B \right)=
    w'^\top \left(  \begin{pmatrix}
        0 &z_2 & 0 & 0
        \\
       \ldots & &\ddots &
        \\
        0&&  &z_n
        \\
        0&\ldots & \ldots& 0
    \end{pmatrix}
    -(\alt{M^+} B)_{2:(n+1)} \right)
\end{align}
Thus we conclude that the matrix on the right hand side is not invertible.
Its determinant is a polynomial in $z_2,\ldots, z_n$ and its highest degree term is
$\pm z_2\cdot \ldots\cdot  z_n \cdot (\alt{M^+} B)_{(n+1),1}$.
By definition of $\alt{M^+} B$ we find $(\alt{M^+} B)_{(n+1),1}=(v^\top B)_1\neq 0$ by Assumption~\ref{ass:div} (recall that we showed invariance of the assumption under the transformation of $M$ and $B$).
We find that the determinant is a non-zero polynomial and the set of its zeros is a set of measure 0
of all $z_2, \ldots, z_n$ but since it does not depend on $z_1$ this holds true for almost all $z^c$.
The same reasoning for $i=2,\ldots, n$ implies that
for every $i$ the set of $z^c$ such that $w_i=0$ is a set of measure zero.
We have therefore shown that for almost all $z^c$ the rank of the left hand side
of \eqref{eq:relation_multiplied} is $n$ and the corresponding vector $w\neq 0$ has all entries different from zero.
Subtracting \eqref{eq:relation_z} and \eqref{eq:relation_ztilde} we obtain
\begin{align}
    0=w^\top \begin{pmatrix}
        z_1 & & 0
        \\
        &\ddots &
        \\
        0&& z_n
        \\
        0&\ldots & 0
    \end{pmatrix}
    -
    w^\top  \begin{pmatrix}
        \alt{z}_1 & & 0
        \\
        &\ddots &
        \\
        0&& \alt{z}_n
        \\
        0&\ldots & 0
    \end{pmatrix}
    = \begin{pmatrix}
        w_1(z_1-\alt{z}_1),
        &
        \ldots
        &
        w_n (z_n-\alt{z}_n), 0
    \end{pmatrix}.
\end{align}
Now $w_i\neq 0$ implies $z_i=\alt{z}_i$. We conclude that for almost all $z^c$ the relation $\p^o(z^c)=z^c$
holds. By continuity this implies that the relation actually holds everywhere.
We conclude that $\pi^c\alt{f}^{-1}f((z^c,z^o))=z^c$ for a fixed $z^o$ but since $z^o$ was arbitrary the relation holds for all $z^o$ and all $z^c$.
Thus we conclude that for $1\leq i\leq n$
\begin{align}
    \langle \bs{e}_i, \alt{f}^{-1}(x)\rangle = \langle \bs{e}_i, \p(f^{-1}(x))\rangle
    = \langle \bs{e}_i, f^{-1}(x)\rangle
\end{align}
holds. This implies that those two representations satisfy \eqref{eq:ident1} and \eqref{eq:ident2}
(with $P^e=\Lambda^e=\mathrm{Id}$ and $T=\mathrm{Id}$).
But since this relation is an equivalence relation in our setting by Lemma~\ref{le:equivalence}
and since we showed equivalence to a representation in standard form in the first step
we conclude that also any two representations are related through \eqref{eq:ident1} and \eqref{eq:ident2}
thus finishing the proof.

\end{proof}

\subsection{Remaining proofs}
\label{app:lemmas_proof}
Here we prove the remaining auxiliary results.
\begin{proof}[Proof of Lemma~\ref{le:generic_assumption}]
Since $M\in \RR^{m\times n}$ has rank $n$ and $m=n+1$ there is exactly one vector $v\in \RR^m$ such
that $v^\top M = 0$ and $v\neq 0$. We claim that this vector has all entries different from zero. Indeed suppose $v_{m}=0$ which
then implies $v_{1:(m-1)}^\top M_{1:(m-1)}=0$.  But by assumption every $n\times n $ submatrix of $M$ is invertible
(this is equivalent to the rows being linearly independent) so we conclude that $v_{1:(m-1)}=0$ which is a contradiction to $v\neq 0$. The same reasoning applies to every entry.
Note that the assumption on $M$ implies that every column has at least one non-zero entry, i.e., every column of $B$ has one entry sampled from a continuous distribution. But then the probability that $v$ is orthogonal to a column is zero because this is a codimension 1 hyperplane of all valuations of this row (since  all entries of $v$ are non-zero).
\end{proof}
\begin{proof}[Proof of Lemma~\ref{le:equivalence}]
Reflexivity is obvious, just pick $T=\mathrm{Id}$, $w=0$, $\Lambda^e=P^e=\mathrm{Id}_{\dim(C^e)}$.
To show symmetry we first consider the atoms. Let $\tilde{T}=T^{-1}$ and $\tilde{\pi}=\pi^{-1}$.
Then
\begin{align}
    a_{\alt{\pi}(i)}^\top=a_{\pi^{-1}(i)}^\top T^{-1} T
    = \alt{a}_{\pi\circ\pi^{-1}(i)}\alt{T}^{-1}=\alt{a}_i\alt{T}^{-1}.
\end{align}
Let $\alt{w}$ be a vector such that for all $1\leq i\leq n$
\begin{align}\label{eq:prop_tilde_w}
    \langle a_i, w\rangle =- \frac{1}{\lambda_{i}}\langle \alt{a}_{\pi(i)}, \alt{w}\rangle.
\end{align}
Such a vector exists by linear independence of $\alt{a}_i$.
Let $\alt{\lambda}_i=\lambda_{\alt{\pi}(i)}^{-1}$. Then we find that the relation
\eqref{eq:ident_atoms2}, namely
\begin{align}
   \langle \alt{a}_{\pi(i)},\alt{f}^{-1}(x)\rangle &=  \lambda_i\left(  \langle a_i, f^{-1}(x)\rangle + \langle a_i,w\rangle\right)
\end{align}
implies
\begin{align}
\begin{split}
    \langle a_{\alt{\pi}(i)}, f^{-1}(x)\rangle &=
    \frac{1}{\lambda_{\alt{\pi}(i)}}  \langle \alt{a}_{\pi\circ \alt{\pi}(i)},\alt{f}^{-1}(x)\rangle
    -\langle a_{\alt{\pi}(i)},w\rangle
    =  \frac{1}{\lambda_{\alt{\pi}(i)}}  \langle \alt{a}_{i},\alt{f}^{-1}(x)\rangle+ \frac{1}{\lambda_{\alt{\pi}(i)}}\langle \alt{a}_{\pi\circ \alt{\pi}(i)}, \alt{w}\rangle
   \\
   &=\alt{\lambda}_i ( \langle \alt{a}_{i},\alt{f}^{-1}(x)\rangle+\langle \alt{a}_{i}, \alt{w}\rangle).
    \end{split}
\end{align}
It remains to be shown that this lifts to the concepts $C^e$.
We first note that the relation \eqref{eq:ident_atoms2} together with \eqref{eq:prop_tilde_w}
and \eqref{eq:ident1} implies that
\begin{align}
    \Lambda^e P^e A^e w = -\alt{A}^e\alt{w}.
\end{align}
Let $\alt{P}^{e}=(P^e)^{-1}$
and $\alt{\Lambda}^e = (P^e)^{-1} (\Lambda^e)^{-1}P^e$.
Then \eqref{eq:ident1} combined with the previous disply implies
\begin{align}\label{eq:ident1_proof_sym}
    A^e f^{-1}(x)=(P^e)^{-1}(\Lambda^e)^{-1}\alt{A}^e\alt{f}^{-1}(x)-A^e w
    = \alt{\Lambda}^e \alt{P}^e\alt{A}^e\alt{f}^{-1}(x)+(P^e)^{-1}(\Lambda^e)^{-1}\alt{A}\alt{w}
    =\alt{\Lambda}^e \alt{P}^e\alt{A}^e(\alt{f}^{-1}(x)+\alt{w}).
\end{align}
The relation
\begin{align}
    A^e = \alt{P}^e \alt{A}^e \alt{T}^{-1}
\end{align}
is a direct consequence of the definitions of $\alt{P}^e$ and $\alt{T}$ and \eqref{eq:ident2}
and the relation
\begin{align}
    b^e = \alt{\Lambda}^e \alt{P}^e(\alt{b}^e-\alt{A}^e w)
\end{align}
follows exactly as in \eqref{eq:ident1_proof_sym}.
The proof of transitivity is similar (first establish the relations on the atomic concepts then lift
it to $C^e$).
\end{proof}
\section{Comparison to Causal Representation Learning}\label{app:crl}
In this appendix we describe causal representation learning and discuss the similarities and differences between the viewpoint taken in this paper and the standard setting in causal representation learning.

Causal Representation Learning (CRL) \cite{scholkopf2021towards, scholkopf2022statistical} is a modern machine learning field that aims to learn representations of data that correspond to the true generative process.
More precisely, if we assume that data $X$ is generated as $X = f(Z)$ where $Z$ are latent causal factors and $f$ is some arbitrary nonlinearity, the goal is to learn $f$ as well as the distribution of $Z$.
To make this more aligned with reality, the latent variables $Z$ are assumed to have causal relationships among them. For instance, $Z_1, Z_2$ could correspond to shape and size of an object respectively and $f$ could correspond to the rendering of an image of the object. Then, using datasets of images, we wish to learn latent variables that correspond to shape and size respectively, along with the image rendering map.
CRL incorporates ideas from the field of causality \cite{spirtes2000causation, pearl2009causality, peters2017elements, rajendran2021structure, squires2022causal} into the field of latent variable models and is a generalization of nonlinear independent component analysis \cite{comon1994independent, hyvarinen2000independent, hyvarinen2002independent} and disentangled representation learning \cite{bengio2013representation,peters2017elements,lecun2015deep}. The field has seen a surge of advances in the last few years, e.g., \cite{
khemakhem2020variational,
kivva2021learning, falck2021multi, liu2022identifying, lachapelle2022disentanglement, buchholz2022function, moran2022identifiable, zimmermann2021contrastive, gresele2021independent, rajendran2023interventional, varici2023score, jiang2023learning, hyvarinen2023identifiability, talon2023towards}
and featured several workshops on Causal Representation Learning, among others at UAI 2022, CLeaR 2023 and NeurIPS 2023.
As motivated in \citet{scholkopf2021towards}, CRL enables many desiderata such as robustness, out of distribution generalization, and in addition enables planning and alignment.
CRL has been successfuly applied to many domains, already showing fascinating progress in genomics \cite{zhang2023identifiability, seigal2022linear} and holds great promise for other domains such as vision or text (see references above).

At the same time, research into foundation models for text, images and other fields has burgeoned in recent years. Significant progress has been made to enable robustness, prediction and generalization among other desiderata. Much of this research has been largely experimental and has not strictly followed the paradigms set forth in causal representation learning. We therefore endeavor to answer this discrepancy in this work with our unified framework.

To describe the intuition, we will re-emphasize the goal of CRL below, because our work critically departs from this goal.

\begin{quote}
    \em{Causal representation learning aims to find the true underlying generative factors for a data distribution.}
\end{quote}
\noindent
The issue here is that attempts towards CRL often conflate the notion of \textit{causal generative factors} with \textit{human interpretable factors}. That is, it's tempting to imagine that generative factors for a dataset are natural causes that are interpretable to human beings. However, there is no reason that this should indeed be the case.

In our work, we take significant inspiration from the framework of causal representation learning and present a slightly relaxed paradigm that is weaker, but more general and also importantly, aligns better with many high-performance foundation models in the literature.
We now describe the setup of CRL more formally in \cref{sec: crl_setup}.
Then, in \cref{sec: crl_diff}, we discuss conceptual differences between causal representation learning and our framework.

\subsection{Formal setup}\label{sec: crl_setup}

We assume that we observe data $X \in \RR^{d_x}$ with the generative model $X = f(Z)$ where $Z \in \RR^{d_z}$ are the latent variables and $f$ is a deterministic mixing function.
The dataset $X$ is sampled from a distribution $p$ and the goal is to recover the mixing function $f$ as well as the distributions of the underlying latent variables $Z_1, \ldots, Z_{d_z}$. To this end, this problem is over-parameterized since multiple pairs of $Z$ and $f$ could fit the dataset apriori, so the field of representation learning makes various assumptions to learn this  model \textit{identifiably}. Here, identifiability is the notion that a unique set of parameters fit the model (up to trivial transformations). This makes the problem well-defined and feasible (however it could still be a hard problem to solve in practice).
Below, we informally summarize two classes of prior works that enable such identifiability guarantees.
\begin{enumerate}
    \item Disentangled representation learning: In this setting, we assume that the distributions of $Z_1, \ldots, Z_{d_z}$ are jointly independent.
    Different studies constrain the distribution of the variables $Z_1, \ldots, Z_{d_z}$, e.g., each $Z_i$ is independently sampled from $N(0, 1)$. This is also the setting studied in nonlinear independent component analysis \cite{comon1994independent, hyvarinen2000independent}.
    \item Causal Representation Learning: This setting is more general than the one above where we relax the independence assumption on the $Z_i$, instead we assume that they have causal relationships among them.
    For instance, they could satisfy a linear structural causal model with Gaussian noise, i.e., $Z = AZ +\eps, \eps \sim N(0, I)$ where $A$ encodes a weighted directed acyclic graph.
    As stated, this setting is more general then the above, since having no causal relationships (i.e., $A = 0$) essentially indicates joint independence.
\end{enumerate}

As explained earlier, in both these domains, a critical notion is that of identifiability \cite{khemakhem2020variational, d2022underspecification, wang2021posterior}, which posits that the given dataset(s) are diverse enough for the modeling assumptions, in order to ensure that a unique set of parameters fit the data.

It's folklore that the disentangled representation learning model is not identifiable if all $Z_i$ are Gaussian \cite{hyvarinen1999nonlinear, locatello2019challenging}.
However, under appropriate assumptions, e.g., distributional, sparsity or observed side-information, the model becomes identifiable, see e.g., \citet{khemakhem2020variational, hyvarinen2016unsupervised, brehmer2022weakly, shen2022weakly, lachapelle2022disentanglement, moran2022identifiable, zheng2022identifiability, kivva2022identifiability, buchholz2022function, zimmermann2021contrastive, gresele2021independent, rajendran2023interventional}.
In addition, various works have proposed methods to learn them (with modest success) \cite{falck2021multi, willetts2021don, dilokthanakul2016deep, Yang_2021_CVPR, li2019identifying, cuiaggnce, buchholz2022function, buchholz2023learning}.

\subsection{Conceptual differences}\label{sec: crl_diff}

In this section, we highlight the conceptual differences between causal representation learning and our framework.

\paragraph{Are causal generative concepts necessarily interpretable?}

While it's tempting to imagine the true causal generative factors as being naturally  human interpretable concepts (e.g., intuitive abstractions such as shape, size or color of an object), there's no obvious reason why this should be the case.
The perspective that we take in this work is that the number of true generative factors could be prohibitively large so that attempting to learn them is infeasible, whereas the number of desired human-interpretable concepts is much smaller.
Moreover, we constantly come up with new concepts of interest since human-interpretable concepts are constantly evolving, e.g., the concept of mobile phones was not existent 100 years ago, but is a valid concept to learn now. Therefore, as opposed to working with a rigid model as in causal representation learning, we take the approach of working with a dynamic representation learning model.

\paragraph{Number of environments needed}
When the ground truth generative process has ambient latent dimention $d_z$, for causal representation learning to be feasible, we usually require $d_z$ environments or datasets. For instance, in the iVAE setting above \cite{khemakhem2020variational}, we require $d_zk + 1 \ge d_z + 1$ environments.
This is indeed necessary, as counterexamples show.
However, it's not clear what the value of $d_z$ is for complex datasets, and it could potentially be prohibitively large.

But the question remains, do we need to learn the entire generative model for solving downstream tasks?
Along these lines, there is a tremendous research effort attempting to relax such requirements by imposing various inductive or domain biases and by building a theory of partial identifiability \cite{kivva2022identifiability, lippe2023biscuit, kong2023partial}.
This is for good reason, since even though it would be ideal to learn the full ground truth generative model, it may be too ambitious or prohibitively large and moreover it maybe not be necessary for most downstream tasks we care about, therefore it suffices to learn what's necessary.
On this note, the relevant task of learning only a subset of the generative latent variables is not easy as the latent variables interact in potentially complicated ways and research in this front is non-trivial and limited progress has been made.

In this work however, we show that if we only wish to learn $m \ll d_z$ concepts, it suffices to have $O(m)$ environments instead of $O(d_z)$ environments. Therefore, our results can be viewed as a result on partial identifiability with sublinear number of environments.

\paragraph{Multi-node interventions}

Multi-node interventions are an exciting area of study in CRL, since they are a natural extension of existing works and are more useful for modeling various real-life datasets where it can be hard to control precisely one factor of variation. This is easily incorporated in our setting by utilizing non-atomic concepts, since each non-atomic concept is a collection of vectors corresponding to atomic concepts and can be modified simultaneously by changing the valuation.

\paragraph{Conditional vs. interventional data}
In this work we focus on conditional data and identification of concept values, while most of the recent identifiability results focus on interventional data and identification of the causal structure \cite{squires2022causal,varici2022intervention,buchholz2023learning,jiang2023learning,von2021self}.
In general conditional data is more frequently available. Conditional data can be obtained by selection through filtering, e.g., patients that are admitted to different hospitals based on the severity of their condition or by the availability of label information as in the CLIP setting \cite{radford2021learning}.
Thus conditional data can be obtained by observing the system in different condtions. On the other hand
interventional data requires manipulation of the system which is more difficult to obtain in general. Moreover, many of the theoretical results require perfect interventions which can be achieved  by randomized control trials but are otherwise a very strong assumption. We note that in typical applications we do not think of concepts as being causal variables that are connected by a graph.
On the other hand, our framework can accomodate the case where concepts correspond to causal variables. Our identifiability result does not allow us to learn the causal structure. However, identifying the causal variables reduces the problem from causal representation learning to causal inference (with observational data) which is still a difficult problem but nevertheless substantially simpler than CRL.

\section{Alternate definitions of concept conditional measure}\label{app:alt_defns_concept_distributions}

In this section, we present alternate feasible definitions for data distributions than the one we introduced in \cref{sec: data_distributions}. While we went with the definition most suited for practice, these alternate definitions are also justifiable in different scenarios and are exciting avenues for further study.

We want to essentially define a concept $C$ via a conditional measure $p_C$ where the concept $C$
is identified with an affine subspace $C=\{Z\in \mathbb{R}^{d_z}: \, A^C Z = b^C\}$ for some $A^C\in \mathbb{R}^{k\times d_z}$, $b^C\in
\mathbb{R}^k$. We consider the shifted parallel linear subspace $C_0=\{Z: A^CZ=0\}$ and the orthogonal splitting
$\mathbb{R}^{d_z}=C_0\oplus V$. Suppose we have a distribution $q_V$ on the space $V$ which will typically be a Gaussian centred around
$v^C\in V$ which is the unique solution of $A^Cv^C=b^C$.
In addition we have a base distribution $p$ on $\RR^{d_z}$.
We will assume that all distributions have a smooth density so that conditional probabilities are pointwise well defined.
There are at least three ways to create the context conditional measure $p_C$.
\begin{enumerate}
    \item The first option is to enforce that the distribution of the $V$ marginal $p_C(v)=\int_{C_0}p_C(v,c)\,\mathrm{d}c$ exactly matches $q_V(v)$ while the in-plane distribution  $p_C(c|v=v_0)\propto p_C(c,v_0)$
    remains invariant, i.e., equals $p(c|v=v_0)$. Under this condition, there is a unique measure $p_C$ given by
    \begin{align*}
        p_C(c,v) \propto q_V(v) \frac{p(c,v)}{\int_{C_0} p(c',v)\,\mathrm{d}c'}.
    \end{align*}
    In other words, to get $(c,v)$ we sample $v\sim q_V$ and then $c\sim p(c|v)$ according to the conditional distribution.
    \item The second option is to again enforce the $V$ marginal but instead of keeping the in plane distribution we average over
    the $V$ space. Then we obtain
    \begin{align*}
        p_C(c,v) \propto q_V(v) \int_{V} p(c,v')\, \mathrm{d}v'.
    \end{align*}
    This corresponds (vaguely) to a $\mathrm{do}(v)$ operation from causal inference, i.e., we sample according to $p(v,c)$ and then do a random intervention on $v$ with target distribution $q_V$.
    \item The third option is to take a Bayesian standpoint. Then we view
    $p$ as a prior and $q_V$ as the context dependent acceptance probability, i.e., we sample by $p$ and then accept with probability $q_V$. Then we find
    \begin{align}
        p_C(c,v) = \frac{p(c,v)q_V(v)}{\int p(c,v)q_V(v)\,\mathrm{d}v\,\mathrm{d}c}\propto p(c,v)q_V(v).
    \end{align}
    This is probably the closest aligned to practice, so this is the one we study in this work.
    To justify this option, imagine the following scenario.
    If we wish to learn the concept of \textit{red color}, a first step would be to curate a dataset of red objects. To do this, we first consider a collection of photos of objects of varying color and then filter out the ones that look red. The concept conditional measure we define aligns with this process. To learn the actual red concept accurately, our theory predicts that it is sufficient to have additional datasets of objects that are not red, from which we can distinguish red objects, thereby learning the concept of red color.
\end{enumerate}

The next question is how to define the measure $q_V$.
When considering a single concept $A^CZ=b^C$ the most natural option to consider
$N(v^C, \sigma^2\mathrm{Id}_V)$ where $v^C \in V$ is the unique solution of $A^Cv^C=b^C$ and $\sigma>0$ is a positive constant. This is what we do in this work (note that $\sig^2$ can be set to $1$ by scaling the concept and valuation accordingly).

However, we can also use alternate definitions as suggested above. For instance, we can set $AZ\indistribution N(b^C,\mathrm{Id})$. Then
$Z\sim N(v^C, (A^\top A)^{-1})$. However, this runs into some technical issues we sketch (and leave to future work to handle this). Consider the intersection of multiple concepts $C^e$. In this case the concept space is given by the intersection
$C=\bigcap C^e$ and $C_0=\bigcap (C^e)_0$ and we have the orthogonal decomposition
$\RR^{d_z} = C_0\oplus \sum V^e$. In general the spaces $V^e$ are not necessarily orthogonal but it is reasonable to assume that the non-degeneracy condition
$\dim(\sum V^i)=\sum \dim(V^e)$ holds. Now set $V=\sum V^e$. If we choose just the standard normal distribution for $q_{V^e}$ we
can define just as in our approach
\begin{align}
    q_V\sim N(v^C, \sigma^2\mathrm{Id}_V).
\end{align}
The second option is to enforce that the marginals of $q_V$ agree with $q_{V^e}$, i.e., $q_V(\Pi_{V^e}(v)\in O)=q_{V^e}(O)$
for $O\subset V^e$.
This results in the set of equations for all $i$
\begin{align}
 A^e   \Sigma (A^e)^\top = \mathrm{Id}_{V^e}.
\end{align}
It is likely that this system has a unique solution when non-degeneracy holds for $V^e$ and this is clearly true for orthogonal spaces but it is not clear how to solve this in general.

\section{Inference-Time Intervention of Large Language Models}
\label{sec: iti}

In this section, we first briefly describe Large Language Models and the recent Inference-Time Intervention (ITI) technique proposed for LLM alignment, which we build on.
Then, we use our framework to provide better intuition on some intriguing observations about ITI, including why it works. And then we exploit our ideas to improve the performance of ITI by choosing the steering direction to be a matrix instead of a vector.

\subsection{Preliminaries}

\paragraph{Large Language Models (LLMs)}
LLMs are large models capable of generating meaningful text given a context sentence. Due to large-scale training, modern LLMs have shown remarkable capabilities and achieve expert-human-like performance in many benchmarks simultaneously. The architecture of many generative pre-trained transformers (GPT)-style LLMs consists of several transformer layers stacked on top of each other.
Since we'll be intervening on them during inference, we'll describe the transformer architecture \cite{vaswani2017attention,elhage2021mathematical} briefly here.
First, the sequence of input tokens (tokens are sub-word units) are encoded into a vector $x_0$ using a (learned) text embedding matrix and in many cases also a positional embedding matrix. Then, a series of transformer layers act on this vector which passes through a residual stream, to obtain vectors $x_0, x_1, \ldots, x_n$. The final vector $x_n$ is then decoded back into token probabilities with a (learned) unembedding matrix.
Each transformer layer consists of a multi-head attention mechanism and a standard multilayer perceptron, which captures the nonlinearity.

In the $l$th layer, each single multi-head attention mechanism can be described as
\begin{align*}
    x_{l + 1} = x_l + \sum_{h = 1}^H Q_l^hx_l^h, \qquad x_l^h = \textrm{Att}_l^h(P_l^hx_l)
\end{align*}

Here, $P_l^h$ and $Q_l^h$ are matrices that linearly map the vector to an activation space and back respectively, and $\textrm{Att}$ denotes the attention mechanism that allows communication across tokens.
Here, we have kept the notation consistent with \citet{li2023inference} for the sake of clarity.

In our setting, we consider the entire set of activations as the learnt latent vector $Z$. That is, the input is $x = x_0$ and the pre-trained model is essentially the function $f$ such that $f(x)$ consists of the concatenation of the vectors $\{x_l\}_{l \ge 1}$, the intermediate activations $\{x_l^h\}_{l \ge 0}$ and also the output of the linear transformations $\{P_l^hx_l\}_{l \ge 0}, \{Q_l^hx_l^h\}_{l \ge 0}$.
Our theory hinges on the assumption that pre-trained LLMs satisfy the linear representation hypothesis, that is, various relevant concepts can be realized via linear transformations of the latent transformation $f(x)$. Indeed, this has been empirically observed to hold in many prior works
\cite{burns2022discovering, tigges2023linear, nanda2023emergent, moschella2022relative, li2023inference, park2023linear, gurnee2023finding, jiang2024learning} (see also related works on geometry of representations \cite{jiang2023uncovering, jiang2024learning} and references therein). It's a fascinating question why such models trained with next token prediction loss also learn linear representations of various human-interpretable concepts such as sentiment, see \citet{jiang2024learning} for recent progress on this problem.

It's well-known that despite large-scale pretraining and subsequent improvement of pre-trained models via techniques like Reinforcement Learning with Human Feedback (RLHF) and Supervised Fine-Tuning (SFT) \cite{ouyang2022training, bai2022training, touvron2023llama}, significant issues still remain \cite{shuster2021retrieval}, e.g., the model can hallucinate or generate incorrect responses (even though the model \textit{knows} the correct response which can be extracted via other means, e.g., Chain-of-Thought prompting \cite{wei2022chain}).
Various methods have been proposed to fine-tune the models \cite{ouyang2022training, bai2022training, bai2022constitutional, touvron2023llama, rafailov2023direct} but many of them are expensive and time- and resource-intensive as they requires huge annotation and computation resources.
Therefore, more efficient techniques are highly desired, one of which is the category of methods known as activation patching.
activation patching (also called activation editing or activation engineering) \cite{hernandez2023measuring, wang2022interpretability, subramani2022extracting, turner2023activation, zou2023representation, zhang2023towards, li2022emergent, meng2022locating}.

\paragraph{Inference-Time Intervention, an activation patching method for truthfulness}

Activation patching is a simple minimally invasive technique to align LLMs to human-preferences. Specifically, given various concepts such as truthfulness, activation patching makes modifications to the model during inference time so that the desired concepts can be aligned.
This technique can be thought of as an application of the emerging field of mechanistic interpretability \cite{olah2022}, which aims to interpret the learnt latent vector in terms of human-interpretable concepts, thereby allowing us to reverse-engineer what large models learn.

Activation patching has many variants \cite{li2022emergent, hernandez2023measuring, meng2022locating}, but we'll focus on the simple technique of adding \textit{steering vectors} to various intermediate layers during intervention \cite{subramani2022extracting, turner2023activation, li2023inference, rimsky2023steering}. This means that during inference, the output activations are modified by adding a constant vector in order to promote alignment of some concept. The vector will be learnt independently based on separate training data.

In particular, a recent technique called Inference-Time Intervention (ITI) was proposed to do this for the specific concept of truthfulness.
ITI focuses on the activation heads $\{\textrm{Att}_l^h(P_l^hx_l)\}_{l \ge 0}$ and add to them steering vectors in order to promote truthfulness. To learn the steering vectors, a subset of the TruthfulQA dataset \cite{lin2021truthfulqa}, namely a dataset of questions $q_i$ with annotated true $(a_{i, j}, 0)$ and false answers $(a_{i, j}, 1)$, are prepared as $\{q_i, a_i, y_i\}_{i = 1, 2, \ldots}$. For each sample, the question and answer are concatenated as a pair and the corresponding activations of the heads $x_l^h$ (for the final token) are computed via forward passes.
Then, a linear probe $\textrm{sigmoid}(\ip{\theta}{x_l^h})$ is independently trained on each activation head to distinguish true from false answers. Finally, the top $K$ heads based on the accuracy of this classification task are chosen (for a tunable hyperparameter $K$) and the steering vector $\theta_l^h$ for the $h$-th head in layer $l$ is chosen to be
the mean difference of the activations between the true and false inputs. The intuition is that this direction roughly captures the direction towards truthfulness.

Formally, for the $h$th head of the $l$th layer, ITI adds the steering vector $\alpha \sig_l^h \theta_l^h$ so as to get
\begin{align*}
    x_{l + 1} = x_l + \sum_{h = 1}^H Q_l^h(x_l^h + \alpha \sig_l^h \theta_l^h), \qquad x_l^h = \textrm{Att}_l^h(P_l^hx_l)
\end{align*}
during inference.
Here, $\theta_l^h$ is the steering vector, $\sig_l^h$ is the standard deviation of the activations of this head along the chosen direction and $\alpha$ is a hyperparameter. That is, the activations are shifted along the truthful directions by a multiple of the standard deviation, and this is repeated autoregressively. Note that this does not depend on the specific GPT-like model being used. The intuition is that during inference, the activations are intervened upon to shift towards the truthful direction. The top $K$ heads are chosen to be minimally intrusive and also a design choice based on observations of the probing metrics.

\paragraph{Performance of ITI}

In \citet{li2023inference}, ITI was shown to significantly improve the truthfulness of various LLMs after having been trained on as few as a few dozen samples, compared to what's needed for Reinforcement Learning based techqniues \cite{ouyang2022training, ganguli2022red}.
ITI was evaluated on the TruthfulQA benchmark \cite{lin2021truthfulqa}, which is a hard adversarial benchmark to evaluate truthfulness of language models.
In particular, it contains 817 questions with a multiple-choice and generation tracks, spanning 38 categories such as logical falsehoods, conspiracies and common points of confusion. For the multiple-choice questions, the accuracy is determined by the conditional probabilities of candidate answers given the question. Evaluating the generation track questions is harder, and it is done by generating a model output and then evaluating it via a finetuned GPT-3-13B model \cite{lin2021truthfulqa, nakano2021webgpt}.
Moreover, the choice of the intervention strength $\alpha$ is calibrated so that it's neither too small (to promote truthfulness) nor
too large (to ensure the original capabilities of the LLM are not lost).
To check if the original capabilies are preserved, \cite{li2023inference} compute two additional quantities to measure how far the modified model deviates from the original model. These are the Cross-Entropy (CE) loss, which is standard in language modeling and the Kullback–Leibler divergence (KL div.) of the next token probabilities before and after intervention. To compute these quantities, a subset of Open Web Text is used \cite{radford2017learning}.
Finally, it was shown that ITI implemented on the LLaMA \cite{touvron2023llama}, Alpaca \cite{taori2023alpaca} and Vicuna \cite{vicuna2023} models significantly improved their performance on the TruthfulQA benchmark compared to the baseline models. Moreover, in many cases, it also beat other techniques such as few-shot prompting and supervised fine-tuning.
Please see \citet{li2023inference} for additional details.

\subsection{Interesting observations of ITI}
\label{sec: iti_observations}

While the elegant ITI technique was designed to align LLMs towards truthfulness in practice, it also raised fascinating and intriguing questions in mechanistic interpretability.
In addition to improving the technique of ITI itself, our work makes progress towards some of these questions via our framework.

\begin{enumerate}
    \item The authors of \citet{li2023inference} state in section $2$ that although the technique works well in practice, it's not clear what ITI does to the model's internal representations. In addition, prior works \cite{burns2022discovering, tigges2023linear, nanda2023emergent, moschella2022relative, park2023linear, jiang2024learning} have observed empirically that the latent representations learned by LLMs seem to have interpretable linear directions, which ITI exploits.
    We use our framework to illustrate in more detail one possible explanation of what ITI does to the model representations and why it works, in the next section.
    \item The authors visualize the geometry of ``truth'' representations in section 3.2 of their work via the following experiment: For the most significant head (layer 14, head 18), after finding the first truthful direction via the linear probing technique, they remove it and attempt to find a second probe orthogonal to the first. They find surprisingly that the second probe is also very informative, leading them to predict that the concept of ``truth'' lies in a subspace, not a single direction.
    Restated in our framework, the concept of truthfulness is a non-atomic concept (as per \cref{def: atomic}).
    This served as an inspiration for our proposed technique in the next section, where we propose to use steering matrices instead of steering vectors for LLM alignment.
    \item As $\al$ was increased, the authors observed that truthfulness of the model increased however helpfulness decreased.
    This suggests that the ``truthfulness'' and ``helpfulness'' concepts are not atomic (as per \cref{def: atomic}) however they share certain atomic concepts. We leave to future work the exciting question of mechanistically extracting such common atomic concepts.
\end{enumerate}

\subsection{The choice of the steering vector}
\label{sec: steering_vector}

We will now analyze the truthfulness concept via our framework and give more insight on why the mean of the differences is a reasonable choice of steering vector for ITI.
Based on our theory, we will then provide a modification to this choice that uses steering matrices instead of steering vectors.
Since this section is based on heuristics and informal assumptions, we will refrain from making any formal claims or analyses.
Indeed, a formal analysis of concepts in natural language is a hard problem in general and we do not attempt it here.
We conclude with ideas for potential extensions that're worth exploring in future work.

Denote the function $h$ to be the sequence of head activations $h(x) = (x_l^h)_{l, h} \in \RR^d$.
Note that while we can study general steering vectors for the entire latent space of representations $f(x)$ learned by LLMs as some works do, ITI focuses only on steering the head activations $h(x)$, so we will apply our framework to this subset representation space.
In addition, we will make the simplification that we neglect the effects of the steering vector from bottom layers towards the top layers, which we do because we are  dealing with sparse steering vectors and also, each single head shift is minor and does not in isolation change the behavior of the model as verified by experiments \cite{li2023inference}[Appendix B.1].

Applying our framework, we model the concept of truth via the concept matrix $A \in \RR^{d_C \times d}$ and two valuations $b_0, b_1 \in \RR^{d_C}$ corresponding to \textit{False} and \textit{True} respectively. In other words, the set of false sentences and true sentences lie respectively in
\begin{align*}
    \calS_{false} = \{x | Ah(x) = b_0\}, \qquad \calS_{true} = \{x | Ah(x) = b_1\}
\end{align*}

Note that they only approximately lie in these spaces because of our notion of concept conditional distribution. However, if we reasonably assume that the Gaussian concentration region is much smaller than the separation between these hyperplanes, then the rest of the arguments in this section should apply.

Now, a steering vector $\eta$ is a vector such that it moves the activations from the false space to the true space, while keeping other concepts unaffected. That is, if we pick a false sentence $x$, i.e., $Ah(x) = b_0$, then the steering vector $\eta \in \RR^d$ essentially steers the activations so that $A(h(x) + \eta) = b_1$. In other words, it moves the sentence from false to true. Indeed, many vectors $\eta$ do satisfy this equality, therefore the goal is to find an optimal $\eta$ that does not (significantly) affect other concepts of interest. Indeed, the ideal steering vector will be $A^+(b_1 - b_0)$ where $A^+$ is the pseudoinverse of $A$. This vector will precisely affect this concept space and will not affect the concept valuations for any concept orthogonal to $A$. However, the issue is that we do not know $A$ and therefore we will approximate this steering vector from training samples.

To this end, suppose we are given a collection of counterfactual sentence pairs $c_i^F, c_i^T$ which correspond to a false answer and a true answer for the same question $q_i$. Consider the $i$th counterfactual pair $c_i^F, c_i^T$. We will assume the reasonable scenario that the only difference among their concepts is the concept of truthfulness. That is, for any other concept $B$ that is orthogonal to $A$, the valuations of these pairs are identical.
Suppose we stack all other (orthogonal) concepts of relevance for this particular counterfactual sample into a matrix $B_i$, then we have
\begin{align*}
    Ah(c_i^F) = b_0, Ah(c_i^T) = b_1, \qquad B_ih(c_i^F) = B_ih(c_i^T)
\end{align*}
for all $i = 1, 2, \ldots$.

Now, it's clear that for a single sample $i$, we cannot choose $\eta_i = h(c_i^T) - h(c_i^F)$ as a steering vector. This is because for such a choice of steering vector, while it's still the case that $B_i(h(c_i^F) + \eta_i) = B_i(h(c_i^T) + \eta_i)$, it need not be the case that $B_j(h(x_j^F) + \eta_i) = B_j(h(x_j^T) + \eta_i)$ for some $j \neq i$, i.e., not all orthogonal concepts are preserved. This happens because $\eta_i = h(c_i^T) - h(c_i^F)$ not only contains the correct shift along the span of $A$ but also unnecessarily shifts along the space orthogonal to $B_i$, which will affect $B_j$.

However, when we have a large enough number of contexts $j$ with sufficient concept diversity, we can make the reasonable prediction that since the context questions $q_j$ are sufficiently diverse (therefore $B_j$ are diverse), the corresponding projections $B_j \eta_i$ are essentially uncorrelated vectors as $i$ varies. In this case, we can pick $\eta$ to be the mean of all the shift vectors $\eta_i$ to achieve the desired effect. Therefore, under the above simplifying assumptions, choosing $\eta$ to be the mean of $\eta_1, \eta_2, \ldots$ will satisfy
\begin{align*}
    Ah(x) &= b_0, A(h(x) + \eta) = b_1 & \text{ exact equalities for truth concept $A$}\\
    Bh(x) &\approx B(h(x) + \eta)& \text{approximate equalities for concepts $B$ orthogonal to $A$}
\end{align*}
for any concept $B$ orthogonal to $A$.
This explains why the choice of mean of the activation differences across counterfactual pairs is a reasonable choice of steering vector. This is precisely the technique used in ITI. While they also experiment with other steering vectors, they found that this works the best for their experiments.

Now, we will continue on our insights to analyze whether we can build better steering vectors $\eta$. We present two crucial insights based on our analysis so far.
\begin{enumerate}
    \item Looking at our desired equations, any \textit{weighted combination} of $\eta_i = h(c_i^T) - h(c_i^F)$ will satisfy $Ah(x) = b_0, A(h(x) + \eta) = b_1$ exactly.
    \item We could potentially choose the steering vector $\eta$ to be a function of $x$ instead of being a constant vector, provided $\eta(x)$ is efficiently computable during inference time.
\end{enumerate}

Exploiting our first insight, we conclude that choosing any weighted combintaion of the $\eta_i$ should be a reasonable choice of steering vector provided we can control its effects on the spaces orthogonal to $A$. That is, we can choose
\begin{align*}
\eta &= \sum_i w_i \eta_i= \sum_i w_i(h(c_i^T) - h(c_i^F))
\end{align*}
as our steering vector. This gives us the extra freedom to tune the weights $w_1, w_2, \ldots$ based on other heuristics. Note that this also captures the choice of the top principal component of the steering vector as experimented in \cite{tigges2023linear}.

Our second observation suggests that
even the steering vector $\eta$ could be a function of $x$, namely $\eta(x)$, provided it's efficiently computable during inference. Therefore, this suggests the usage of
\begin{align*}
\eta(x) &= \sum_i w_i(x)(h(c_i^T) - h(c_i^F))
\end{align*}
as our steering vector where the weights $w_i(x)$ depend on $x$.

Based on these two observations, we propose our ITI  modification. We choose the steering vector to be dependent on the context $x$, with weights chosen to be $w_i = \ip{\lambda(x)}{\lda(c_i^F)}$ for a sentence embedding $\lda$ (such as Sentence-BERT \cite{reimers-2019-sentence-bert}). That is,
\begin{align*}
\eta(x) &= \sum_i\ip{\lda(x)}{\lda(c_i^F)}(h(c_i^T) - h(c_i^F))
\end{align*}

Indeed, this is reasonable as if a context $x$ is close to $c_i^F$ for a specific training sample $i$ in terms of their sentence embeddings $\lda(x)$ and $\lda(c_i^F)$, then this particular sample's steering vector should be upsampled. In other words, we can think of the training sample contexts as voting on their respective counterfactual steering vector, with weights determined by the similarity between the representation of the test context and the representation of the sample context.
A justification would be that $B_i(x)$ (the relevant concepts for a datapoint) depend smoothly on $x$
(proximity is measured by similarity of embeddings) so it makes sense to upweight close points.

Finally, we need to argue that we can compute this efficiently during inference. For this, we exploit the structure of our steering vector representation as follows.
\begin{align*}
\eta(x) &= \sum_i \ip{\lda(x)}{\lda(c_i^F)}(h(c_i^T) - h(c_i^F))\\
&= \bigg(\sum_i(h(c_i^T) - h(c_i^F))\lda(c_i^F)'\bigg)h(x)\\
&= Mh(x)
\end{align*}
for the matrix $M = \sum_i(h(c_i^T) - h(c_i^F))\lda(c_i^F)'$, where $v'$ denotes the tranposed vector. We remark that the weights $w_i(x)$ as used could potentially be negative but this is not an issue since the projection of the corresponding counterfactual vector in the direction of $B$ is still random and we finally normalize $\eta(x)$, so the magnitude doesn't matter.

Therefore, this steering can be done efficiently by precomputing the \textit{steering matrix} $M$ and then during inference, we simply compute the steering vector $\eta(x)$ as $\eta(x) = Mh(x)$.

\paragraph{Implementation considerations}

We briefly note down some design choices we made in our implementation of the above method.

\begin{enumerate}
    \item Since $\eta(x)$ is a function of $x$, the standard deviation of the activation projection on this direction, i.e., $\sig_l^h(x)$ cannot be precomputed (as \citet{li2023inference} do), therefore we compute them dynamically during inference, which takes little overhead with fast tensorization operations (in particular, this is not the slow step).
    \item We opted to go with evaluating the model only on the multiple-choice questions. This is partly because to evaluate the generated text, the recommended method is to use fine-tuned GPT-3-13B models but OpenAI have retired many of their older models as of this year, and therefore, the entire batch of experiments would have to be rerun with their newer models which could potentially change the baselines, and also because this work is a proof-of-concept rather than a comprehensive evaluation.
    \item For computing the sentence embeddings, we only use the question prompts, as they contain all relevant contexts. And we normalize $\eta(x)$ during inference time.
\end{enumerate}

\paragraph{Additional ideas for improvement}
We re-iterate that our experimental exploration is not exhaustive and the preliminary experiments are merely meant to be a proof-of-concept.
In this section, building on our insights, we outline some further ideas to improve the performance of ITI.
We leave to future work to comprehensively explore these techniques in order to extract better performance towards LLM alignment.
\begin{enumerate}
    \item Note that we opted to go with the weights $\ip{\lda(x)}{\lda(c_i^F)}$ where $\lda$ was chosen to be a sentence transformer embedding \cite{reimers-2019-sentence-bert}. While this is a reasonable choice, similarity metrics could be measured in other ways, e.g., with other sentence embedding models.
    \item Going further, the weights do not have to be similarity scores and could be chosen via other heuristics. For instance, they could be chosen to be constants but potentially be optimized using a hold-out test set.
    \item As \citet{li2023inference} noted, the ITI technique could be applied on top of fine-tuned models in order to further improve their performance. Therefore, our proposed modification could also potentially be applied on top of fine-tuned models.
\end{enumerate}

\section{Contrastive algorithm for end-to-end concept learning}
\label{sec: contrastive}

In this section, we will describe the details of the contrastive learning method for learning concept representations end-to-end.
First, we will prove the computation of the true log-odds.

\logodds*
\begin{proof}
    This follows from \cref{eqn: logodds} in the proof of \cref{thm:ivae}.
\end{proof}

From our main identifiability results, we can assume  without loss of generality that the concept vectors we learn are coordinate vectors.
In other words, we consider a neural network $h^{\theta}$ with parameters $\theta$ with output neurons $h^{\theta}_1, \ldots, h^{\theta}_n$ such that the $n$ atomic concepts will now correspond to the concept vectors $e_1, \ldots, e_n$ (which is reasonable as they are only identifiable up to linear transformations).
Therefore, for each environment $e$, we can train classifiers of the form
\begin{align*}
    g_e(X, \al^e, \beta^e_k, \gam^e_k, \theta) &=  \al^e - \sum_{k=1}^{\dim(C_e)} (\beta_e^kh^{\theta}_k(X))^2 + \sum_{k=1}^{\dim(C_e)} \gam_e^k(h^{\theta}_k(X))
\end{align*}
equipped with standard cross-entropy loss,
for hyperparameters $\al^e, \beta^e_k, \gam^e_k, \theta$.
Indeed, this is reasonable since if the training reaches the global optima in the ideal case, then the loss function will correspond to the Bayes optimal classifier and therefore, $g_e(X, \al^e, \beta^e_k, \gam^e_k, \theta) = \ln(p^e(Z))- \ln(p(Z))$, which along with \cref{lem: logodds} will suggest that the learnt network $h$ is linearly related to the function $A^ef^{-1}$, as desired. Lastly, we choose the loss function to be the aggregated CE loss and an extra regularization term. That is,
\begin{align*}
    \calL = \sum_{e} \underbrace{-\EE_{j \sim \text{Unif}(\{0, e\})} \EE_{X \sim X^e} \left(\ln \frac{e^{\boldsymbol{1}_{j = e}g_e(X)}}{1 + e^{g_e(X)}}\right)}_{\text{CE loss for environment $e$}} \quad+ \quad \eta \norm{\beta}_1
\end{align*}
for a regularization hyperparameter $\eta$.

\section{Additional details about the synthetic setup}
\label{sec: synth_setup}

In this section, we detail our synthetic setup in \cref{sec: synthetic}.
The base distribution is sampled from a Gaussian mixture model with 3 components
whose parameters are chosen randomly.
The weights are randomly chosen from Unif$(0.3, 1)$ (and then normalized), the entries of the means are chosen from Unif$(-1, 1)$ and the covariance is chosen to be a diagonal matrix with entries in Unif$(0.01, 0.015)$ (note that the diagonal nature doesn't really matter since a map $f$ will be applied to this distribution).
The mixing function $f$ is chosen to be either (i) linear or (ii) nonlinear with a 1-layer MLP containing 16 hidden neurons and LeakyReLU$(0.2)$ activations.

The number of concepts $n$ is intentionally chosen to be less than the ground truth dimension $d_z$ and the number of concepts is $m = n + 1$ as per our theory.
The concepts are taken to be atomic, with the concept vectors and valuations chosen randomly, where each entry of the concept vector is chosen i.i.d from Unif$(-0.3, 0.3)$, and the resampling distribution is chosen to be a Gaussian with variance $0.005$.
Finally, we choose $5000$ samples per environment, sampled via the rejection sampling Algorithm \ref{algo: sampling}.
For the contrastive algorithm, we choose the architecture to either be linear or nonlinear with a 2-layer MLP with 32 hidden neurons in each layer, with the final parametric layer chosen based on the known concept, to have the form described above.
We train for 100 epochs with $\eta = 0.0001$ and use Adam optimizer with learning rates $0.5$ for the parametric layer and $0.005$ for the non-parametric layer, with a Cosine Annealing schedule \cite{cosine}.

\section{Controllable generative modeling via rejection sampling}
\label{sec: sampling}

In this section, we will describe how to sample from a concept conditional distribution with a known concept. Once the concepts are learned in our framework, we can use this technique to generate new data satisfying various desired concepts, which will aid in controllable generative modeling.

Consider the base distribution on $Z \in \RR^{d_z}$ with density $p(Z)$. Suppose we wish to sample from a concept $C$ given by $AZ = b$ and resampling distribution $q$. We additionally assume that $q$ is efficiently computable and an upper bound $L$ is known for its density, i.e., $L \ge \max(q)$.

Recall that the desired density is defined as
\[p_C(Z) \propto p(Z)
 \prod_{i \le dim(C)} q((AZ - b)_i)\]

Note that it's infeasible to compute the normalization constant for such complex distributions. However, we bypass this by using rejection sampling. We describe the procedure in Algorithm \ref{algo: sampling}.

\begin{algorithm}[!ht]
\label{alg: sampling}
\DontPrintSemicolon
\KwIn{
\begin{itemize}
    \item Base distribution $p$
    \item Resampling distribution $q$ with upper bound $L \ge \max(q)$
    \item Concept $C$ with transformation $A$ and valuation $C$
\end{itemize} }
\KwOut{Returns a single sample from $p_C(Z)$}
$M = L^{dim(C)}$\\
\tcp{Repeat trials until condition is met}
\While{True}{
Z = yield(p)\\
U = yield(Unif(0, 1))\\
$R = \frac{1}{M}\prod_{i \le dim(C)} q((AZ - b)_i)$\\
\If{$R \ge U$}{\Return{$Z$}}
}
\caption{Rejection sampling for controllable generative modeling}
\label{algo: sampling}
\end{algorithm}

Informally, we first sample $Z \sim p$ (we overload notation for both density and the distribution) and an independent variable $U \sim Unif(0, 1)$, the uniform distribution on $(0, 1)$. We accept the variable $Z$ if
\[\frac{1}{M}\prod_{i \le dim(C)} q((AZ - b)_i) \ge U\]
for a predetermined upper bound $M$ on the quantity $\prod_{i \le dim(C)} q((AZ - b)_i)$. If the inequality is false, we simply reject the sample and repeat.

Now we will argue why this algorithm is correct, which is accomplished in \cref{thm: proof_algo_sampling}.
Let
\[N_C = \int_{Z} p(Z)\prod_{i \le dim(C)} q((AZ - b)_i)\]
be the normalization constant in the definition of $p_C(Z)$. Therefore
\[p_C(Z) = \frac{1}{N_C}p(Z)
 \prod_{i \le dim(C)} q((AZ - b)_i)\]

\begin{lemma}
\label{lem: acceptance_prob}
    Let $M \ge \max(q)^{dim(C)}$
    The acceptance probability of each iteration of the while loop in Algorithm \ref{algo: sampling} is $Pr[Z{\text{ accepted}}] = \frac{N_C}{M}$
\end{lemma}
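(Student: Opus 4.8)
The plan is to use the standard rejection-sampling identity: condition on the proposal $Z$, observe that the auxiliary variable $U$ is uniform and independent, and integrate out. First I would fix one iteration of the while loop and note that it draws $Z\sim p$ and $U\sim\mathrm{Unif}(0,1)$ independently, and accepts precisely on the event $\{U\le R\}$ where $R=\frac{1}{M}\prod_{i\le\dim(C)}q((AZ-b)_i)$.

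The one point that needs care is checking that $R\in[0,1]$ almost surely, so that the conditional acceptance probability is \emph{exactly} $R$ and not $\min(R,1)$. Nonnegativity is immediate since $q$ is a probability density. For the upper bound, $\prod_{i\le\dim(C)}q((AZ-b)_i)\le (\max q)^{\dim(C)}\le M$ by the hypothesis $M\ge (\max q)^{\dim(C)}$, hence $R\le 1$. Given this, for any fixed value of $Z$ we have $\Pr[U\le R\mid Z]=R$ because $U$ is uniform on $(0,1)$ and independent of $Z$.

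Then I would take the expectation over $Z\sim p$ using the tower property:
\begin{align*}
\Pr[Z\text{ accepted}]
&=\EE_{Z\sim p}\big[\Pr[U\le R\mid Z]\big]
=\EE_{Z\sim p}[R]
=\frac{1}{M}\int_Z p(Z)\prod_{i\le\dim(C)}q((AZ-b)_i)\,\mathrm{d}Z
=\frac{N_C}{M},
\end{align*}
where the last equality is the definition of the normalization constant $N_C$. This completes the argument.

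There is no real obstacle here; the only thing to be vigilant about is the bound $R\le 1$ (which is exactly what the assumption $M\ge (\max q)^{\dim(C)}$ buys us) and the measurability/integrability needed to apply Fubini when swapping the expectation over $Z$ and the probability over $U$, both of which are routine since $p$ and $q$ are densities and the integrand is nonnegative.
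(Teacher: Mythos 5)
Your proof is correct and follows essentially the same route as the paper's: condition on $Z$, use that $U$ is uniform and that the acceptance ratio lies in $[0,1]$ (which is exactly what $M \ge \max(q)^{\dim(C)}$ guarantees), then integrate over $Z\sim p$ to obtain $N_C/M$. If anything, your version is slightly cleaner, since you keep the factor $1/M$ throughout rather than rewriting the threshold in terms of $\max(q)$ as the paper does in its intermediate step.
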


\begin{proof}
We have
\begin{align*}
    Pr[Z{\text{ accepted}}] &= Pr_{U, Z}\left[U \le \frac{1}{M}\prod_{i \le dim(C)} q((AZ - b)_i)\right]\\
    &= Pr_{U, Z}\left[U \le \prod_{i \le dim(C)} \frac{q((AZ - b)_i)}{\max(q)}\right] && \text{ since $M \ge max(q)^{dim(C)}$}\\
    &= \int_Z Pr_U\left[U \le \prod_{i \le dim(C)} \frac{q((AZ - b)_i)}{\max(q)}\right] p(Z)\, dZ&& \text{as $U, Z$ are independent}\\
    &= \int_Z \left[\prod_{i \le dim(C)} \frac{q((AZ - b)_i)}{\max(q)}\right] p(Z)\, dZ && \text{ since $\frac{q((AZ - b)_i)}{\max(q)} \le 1$ always}\\
    &= \int_Z \frac{N_C p_C(Z)}{M}\, dZ\\
    &= \frac{N_C}{M}
\end{align*}
\end{proof}

Before we prove correctness, we will remark on the expected number of trials needed for accepting each sample.

\begin{corollary}
    The expected number of trials needed to generate a single sample is $\frac{M}{N_C}$
\end{corollary}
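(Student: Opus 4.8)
The plan is to recognize that the while loop in Algorithm~\ref{algo: sampling} performs a sequence of independent and identically distributed Bernoulli trials, so that the number of trials until the first acceptance is a geometric random variable. First I would note that each pass through the loop draws a fresh $Z\sim p$ and an independent $U\sim\mathrm{Unif}(0,1)$, and the acceptance event depends only on this pair; since the draws across iterations are mutually independent, the acceptance events form an i.i.d.\ sequence. By Lemma~\ref{lem: acceptance_prob} (whose hypothesis $M\ge\max(q)^{\dim(C)}$ holds here because $M=L^{\dim(C)}$ with $L\ge\max(q)$), each such trial succeeds with probability $p_{\mathrm{acc}}=N_C/M$, which is strictly positive since $p$ and $q$ have full support so $N_C>0$.

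Next I would let $N$ denote the number of trials needed to produce one sample, i.e.\ the index of the first successful trial. Then $N$ is geometrically distributed with success parameter $p_{\mathrm{acc}}$, so $\Pr[N=k]=(1-p_{\mathrm{acc}})^{k-1}p_{\mathrm{acc}}$ for $k\ge 1$, and in particular the loop terminates almost surely. The standard computation $\EE[N]=\sum_{k\ge1}k(1-p_{\mathrm{acc}})^{k-1}p_{\mathrm{acc}}=1/p_{\mathrm{acc}}$ then gives $\EE[N]=M/N_C$, as claimed.

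There is essentially no obstacle here beyond being careful about two bookkeeping points: (i) invoking Lemma~\ref{lem: acceptance_prob} requires checking its hypothesis on $M$, which is immediate from the algorithm's definition $M=L^{\dim(C)}$ and $L\ge\max(q)$; and (ii) the appeal to independence across loop iterations, which holds because Algorithm~\ref{algo: sampling} generates a new independent pair $(Z,U)$ at each iteration. Once these are in place the result is just the mean of a geometric distribution, so I would not belabor the summation.
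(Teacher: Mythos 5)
Your proposal is correct and follows the same route as the paper: the loop iterations are independent, so the trial count is geometric with parameter $N_C/M$ by Lemma~\ref{lem: acceptance_prob}, and the expectation is the reciprocal. The extra bookkeeping you include (verifying $M = L^{\dim(C)} \ge \max(q)^{\dim(C)}$ and the independence of the $(Z,U)$ pairs across iterations) is sound but not a departure from the paper's argument.
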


\begin{proof}
    Note that each iteration of the while loop is independent, therefore the number of trials until acceptance is distributed as a geometric random variable whose expectation is the inverse of the parameter.
\end{proof}

This suggests that for our algorithm to be efficient in practice, $M$ should be chosen as small as possible, i.e., estimates of $\max(q)$ should be as tight as possible.

\begin{theorem}
\label{thm: proof_algo_sampling}
    Algorithm \ref{algo: sampling} yields samples from the concept conditional distribution $p_C$.
\end{theorem}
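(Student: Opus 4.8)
The plan is the standard rejection-sampling correctness argument, building directly on Lemma~\ref{lem: acceptance_prob}. First I would fix one iteration of the while loop and compute the law of the pair $(Z,U)$ conditioned on the acceptance event $\{U \le R\}$, where $R = \frac{1}{M}\prod_{i\le \dim(C)} q((AZ-b)_i)$. Since $Z\sim p$ and $U\sim \mathrm{Unif}(0,1)$ are independent and the choice $M = L^{\dim(C)}\ge \max(q)^{\dim(C)}$ guarantees $R\in[0,1]$, for any measurable set $E\subseteq\RR^{d_z}$ we have
\begin{align*}
\Pr[Z\in E \text{ and } Z \text{ accepted}]
&= \int_E \Pr_U[U\le R]\, p(Z)\,dZ
= \int_E R\, p(Z)\,dZ
= \frac{1}{M}\int_E p(Z)\prod_{i\le \dim(C)} q((AZ-b)_i)\,dZ.
\end{align*}
By Lemma~\ref{lem: acceptance_prob} the total acceptance probability of the iteration equals $N_C/M$, so dividing gives
\begin{align*}
\Pr[Z\in E \mid Z \text{ accepted}]
= \frac{1}{N_C}\int_E p(Z)\prod_{i\le \dim(C)} q((AZ-b)_i)\,dZ
= \int_E p_C(Z)\,dZ,
\end{align*}
which is exactly the claimed conditional distribution.

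Second, I would argue that the algorithm terminates almost surely and that the returned sample has the distribution computed above. The successive iterations are i.i.d., each accepting with the fixed positive probability $N_C/M$ (here one uses that $N_C>0$, which holds because $p$ and $q$ are densities, hence strictly positive on sets of positive measure, so the integrand defining $N_C$ is not a.e.\ zero). Hence the index of the first accepted trial is a geometric random variable, finite with probability one, and conditioning on "the $k$-th trial is the first acceptance" the returned $Z$ is just the accepted variable of an independent copy of a single iteration. Therefore the output law is precisely the conditional law $p_C$ derived in the first step, independent of $k$.

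There is no genuinely hard step here; the argument is routine once Lemma~\ref{lem: acceptance_prob} is in hand. The only points requiring a little care are: (i) verifying $R\le 1$ so that $\Pr_U[U\le R]=R$ exactly (this is where $M=L^{\dim(C)}$ and $L\ge\max(q)$ are used); (ii) ensuring $N_C>0$ so that the normalization and the almost-sure termination both make sense; and (iii) noting the independence of iterations so that conditioning on acceptance does not distort the within-iteration law of $Z$. Assembling these yields the theorem.
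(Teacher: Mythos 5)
Your proposal is correct and follows essentially the same route as the paper's proof: both compute the law of the accepted sample by integrating $\Pr_U[U\le R]=R$ against $p$ and dividing by the acceptance probability $N_C/M$ from Lemma~\ref{lem: acceptance_prob} (you use general measurable sets where the paper uses the CDF, an immaterial difference). Your added remarks on almost-sure termination and $N_C>0$ are careful touches the paper relegates to the corollary or leaves implicit, but they do not change the argument.
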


\begin{proof}
The proof is at heart the proof of correctness of rejection sampling.
For arbitrary parameters $t_1, \ldots, t_{d_z} \in \RR$, let's compute the cumulative density of the samples output by Algorithm \ref{algo: sampling} and show that it matches the cumulative distribution function of $p_C(Z)$ evaluated at $t_1, \ldots, t_{d_z}$, which will complete the proof. That is, we wish to calculate \begin{align*}
    Pr[Z_1 \le t_1, \ldots, Z_{d_z} \le t_{d_z} | Z \text{ accepted}]
    &= \frac{Pr[Z_1 \le t_1, \ldots, Z_{d_z} \le t_{d_z}, Z\text{ accepted}]}{Pr[Z{\text{ accepted}}]}
\end{align*}
We already computed the denominator in \cref{lem: acceptance_prob}.
Therefore,
\begin{align*}
    Pr&[Z_1 \le t_1, \ldots, Z_{d_z} \le t_{d_z} | Z \text{ accepted}]\\
    &= \frac{M}{N_C}Pr[Z_1 \le t_1, \ldots, Z_{d_z} \le t_{d_z}, Z\text{ accepted}]\\
    &= \frac{M}{N_C} \EE_Z \left[\one_{Z_1 \le t_1} \ldots \one_{Z_{d_z} \le t_{d_z}}\cdot\EE_U[\one_{Z \text{ accepted}}]\right]\\
    &= \frac{M}{N_C} \EE_Z \left[\one_{Z_1 \le t_1} \ldots \one_{Z_{d_z} \le t_{d_z}}\cdot\frac{1}{M} \prod_{i \le dim(C)}q((AZ - b)_i)\right] && \text{ from the proof of \cref{lem: acceptance_prob}}\\
    &= \int_Z \one_{Z_1 \le t_1} \ldots \one_{Z_{d_z} \le t_{d_z}}\cdot \frac{1}{N_C} \prod_{i \le dim(C)}q((AZ - b)_i) p(Z)\, dZ\\
    &= \int_Z \one_{Z_1 \le t_1} \ldots \one_{Z_{d_z} \le t_{d_z}}\cdot p_C(Z)\, dZ
\end{align*}
which is precisely the cumulative distribution function of $p_C(Z)$ evaluated at $t_1, \ldots, t_{d_z}$.
\end{proof}

\end{document}